\newtheorem{theorem}{Theorem}
\newtheorem{lemma}{Lemma}
\newtheorem{corollary}{Corollary}
\newtheorem{proof}{Proof}
\begin{document}

\title{Communication-Efficient Federated Learning by Exploiting Spatio-Temporal Correlations of Gradients}

\author{Shenlong~Zheng, Zhen~Zhang, Yuhui~Deng,~\IEEEmembership{Member,~IEEE,} Geyong~Min,~\IEEEmembership{Member,~IEEE,}, and Lin~Cui,~\IEEEmembership{Member,~IEEE}
	\thanks{ S. Zheng, Z. Zhang (corresponding author), Y. Deng (corresponding author), and L. Cui are with the College of Information Science and Technology, Jinan University, Guangzhou, China, 510632. E-mail: zsl503503@stu.jnu.edu.cn, zzhang@jnu.edu.cn, tyhdeng@jnu.edu.cn, tcuilin@jnu.edu.cn.}
		
		\thanks{
		G. Min is with the College of Engineering, Mathematics and Physical
		Sciences, University of Exeter, Exeter EX4 4QF, United Kingdom. E-mail:
		g.min@exeter.ac.uk.}}
\markboth{IEEE Transactions on Computers}%
{Shell \MakeLowercase{\textit{et al.}}: A Sample Article Using IEEEtran.cls for IEEE Journals}

\maketitle

\begin{abstract}
	Communication overhead is a critical challenge in federated learning, particularly in bandwidth-constrained networks. Although many methods have been proposed to reduce communication overhead, most focus solely on compressing individual gradients, overlooking the temporal correlations among them. Prior studies have shown that gradients exhibit spatial correlations, typically reflected in low-rank structures. Through empirical analysis, we further observe a strong temporal correlation between client gradients across adjacent rounds. Based on these observations, we propose GradESTC, a compression technique that exploits both spatial and temporal gradient correlations. GradESTC exploits spatial correlations to decompose each full gradient into a compact set of basis vectors and corresponding combination coefficients. By exploiting temporal correlations, only a small portion of the basis vectors need to be dynamically updated in each round. GradESTC significantly reduces communication overhead by transmitting lightweight combination coefficients and a limited number of updated basis vectors instead of the full gradients. Extensive experiments show that, upon reaching a target accuracy level near convergence, GradESTC reduces uplink communication by an average of 39.79\% compared to the strongest baseline, while maintaining comparable convergence speed and final accuracy to uncompressed FedAvg. By effectively leveraging spatio-temporal gradient structures, GradESTC offers a practical and scalable solution for communication-efficient federated learning.
\end{abstract}

\begin{IEEEkeywords}
Federated learning, communication efficiency, gradient compression, spatio-temporal correlation
% Article submission, IEEE, IEEEtran, journal, \LaTeX, paper, template, typesetting.
\end{IEEEkeywords}

\section{Introduction} \label{sec:introduction}

\IEEEPARstart{F}{ederated} Learning (FL) enables multiple clients to collaboratively train a global model without sharing local data, ensuring privacy and reducing risks. In FL, participants train models locally on private data and upload gradients to a central server, which aggregates them to update the global model~\cite{konecny2015federated,mcmahan2017communication}. 
FL has demonstrated advantages in various domains such as mobile technology~\cite{bonawitz2019towards} and edge computing~\cite{abreha2022federated}. However, despite these benefits, FL still suffers from significant communication overhead during iterative training. Frequent model updates can lead to network congestion, especially in resource-constrained environments~\cite{li2020federated}. Managing communication efficiency becomes an even more critical challenge as FL scales to larger networks.

% Advantages of FL have been demonstrated in various fields, including mobile technology and edge computing. Despite its benefits, FL faces a major challenge of significant communication overhead during iterative training. 

% \section{Introduction}
To address the challenge of communication overhead in FL, most existing approaches focus on the uplink phase, where the client transmits gradients to the server~\cite{konevcny2016federated}, since uplink communication is typically more bandwidth-constrained than downlink. Common techniques such as quantization and sparsification reduce uplink communication by lowering the numerical precision of gradients or transmitting only a subset of them. While effective in reducing data size, these methods often overlook the inherent correlations among gradient values~\cite{shanmugarasa2023systematic}. In contrast, correlation-based compression methods aim to preserve important structural relationships within the gradient data. By capturing and exploiting these correlations, such approaches have the potential to achieve high compression ratios and can be integrated with sparsity or quantization to further enhance communication efficiency.

Prior studies have confirmed that gradients exhibit strong spatial correlation, as their effective dimensionality is much lower than their apparent dimensionality~\cite{zhao2023inrank,cosson2023lowrank}. This correlation enables more effective compression of gradients using low-rank approximation or similar techniques~\cite{wang2018atomo,yu2018gradiveq}. Earlier methods primarily utilized the internal correlation within an individual gradient. Recently, some approaches have been proposed that exploit the correlation between gradients across clients for compression~\cite{wang2023svdfed,abrahamyan2021learned}. However, methods that exploit the correlation between gradients across clients are not always reliable in FL, as the correlation between client gradients decreases with the number of local training rounds and varying data distributions. Particularly when the data across clients are non-IID (Independent and Identically Distributed), the gradient directions across clients change significantly, resulting in lower correlation.

\begin{figure*}[!t]
	\centering
	\begin{minipage}{0.32\linewidth}
		\centering
		\includegraphics[width=1.0\linewidth]{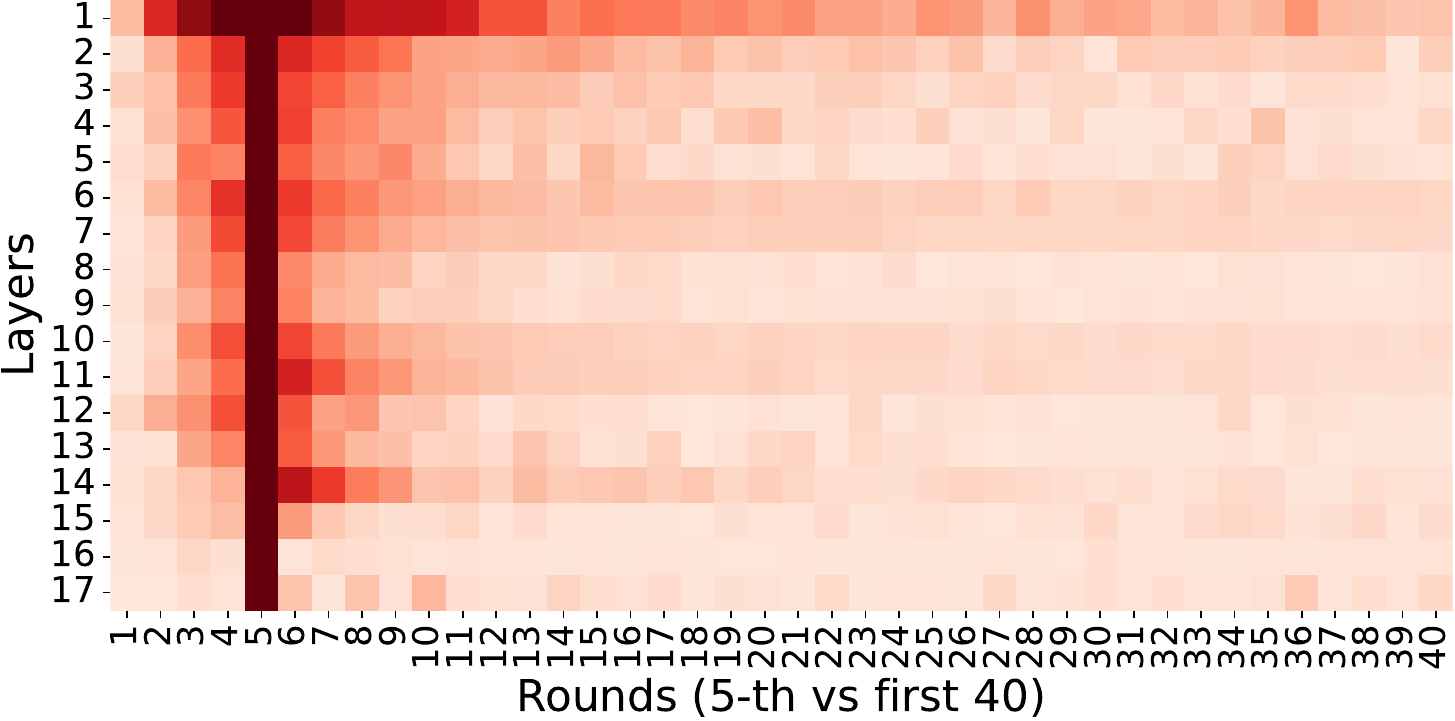}
	\end{minipage}
	\begin{minipage}{0.32\linewidth}
		\centering
		\includegraphics[width=1.0\linewidth]{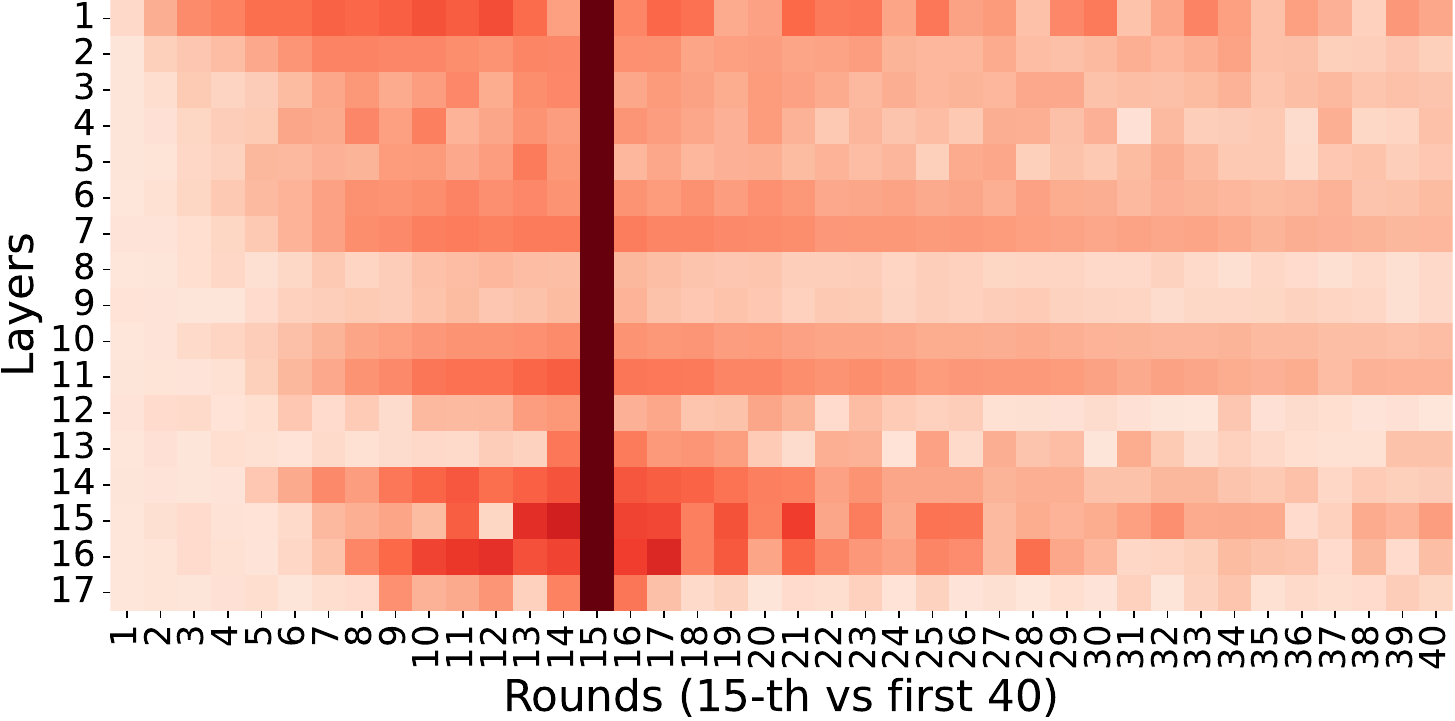}
	\end{minipage}
	\begin{minipage}{0.32\linewidth}
		\centering
		\includegraphics[width=1.0\linewidth]{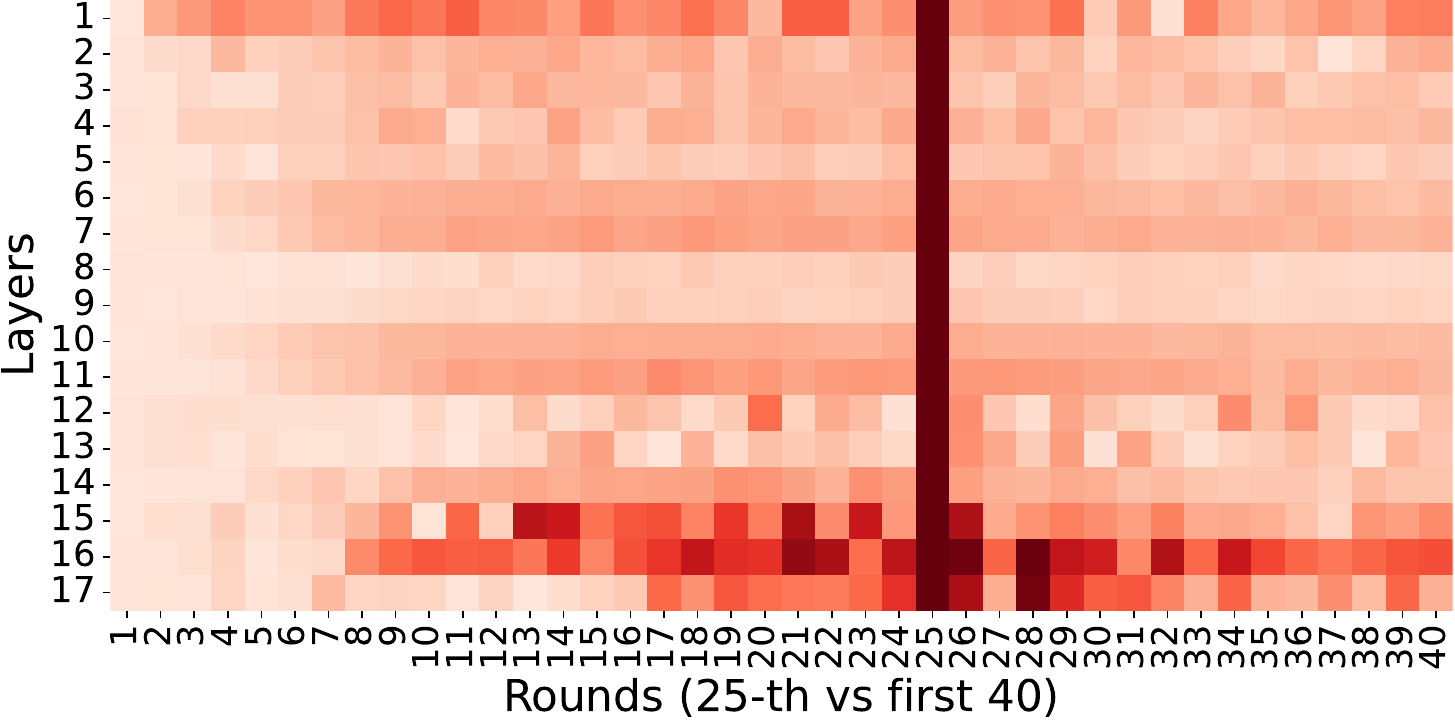}
	\end{minipage}
	%\qquad
	%让图片换行，
	
	\begin{minipage}{0.32\linewidth}
		\centering
		\includegraphics[width=1.0\linewidth]{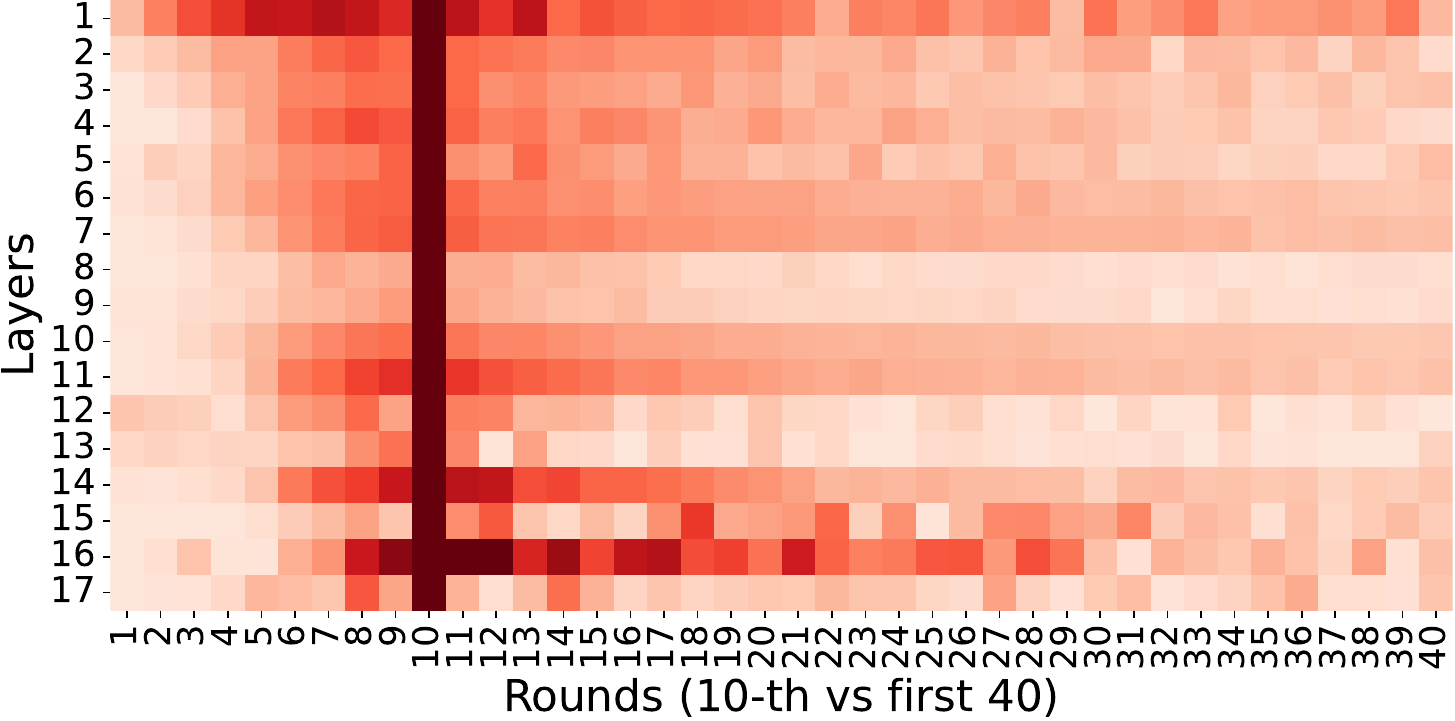}
	\end{minipage}
	\begin{minipage}{0.32\linewidth}
		\centering
		\includegraphics[width=1.0\linewidth]{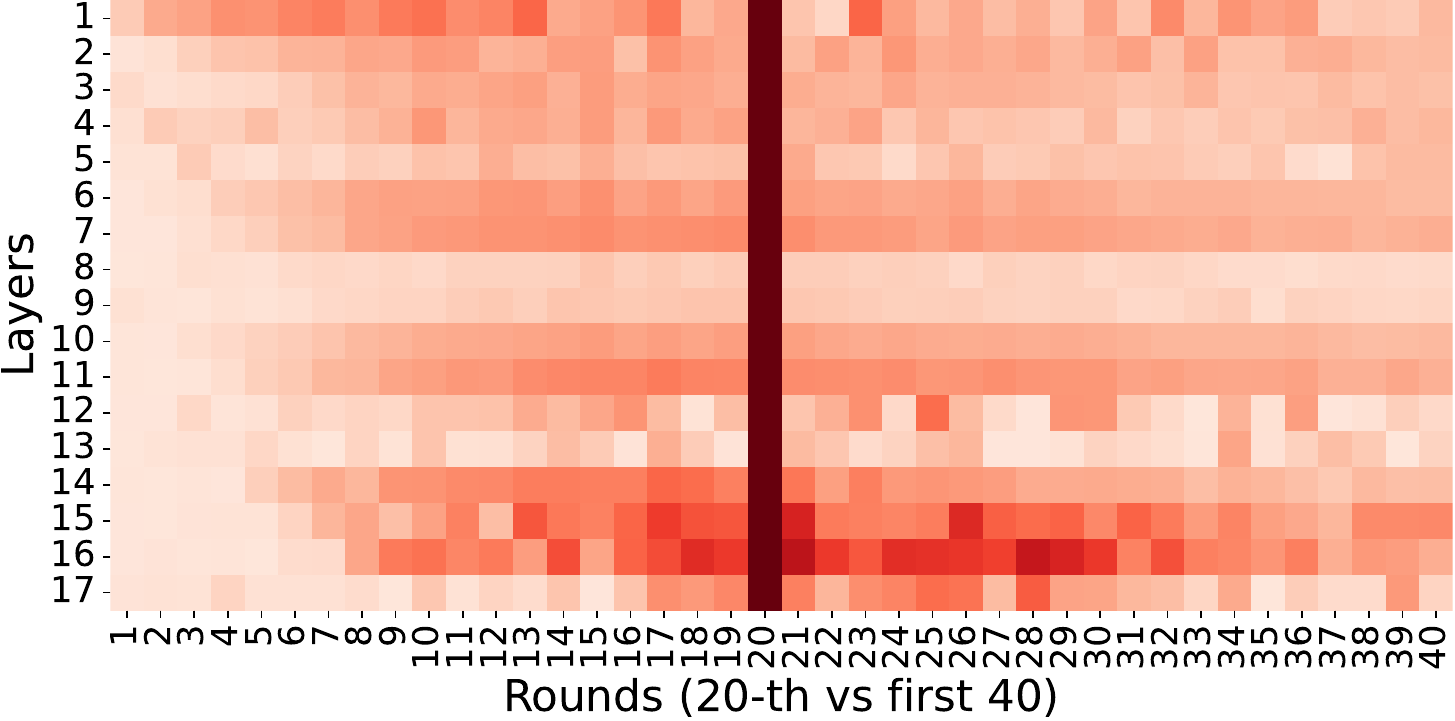}
	\end{minipage}
	\begin{minipage}{0.32\linewidth}
		\centering
		\includegraphics[width=1.0\linewidth]{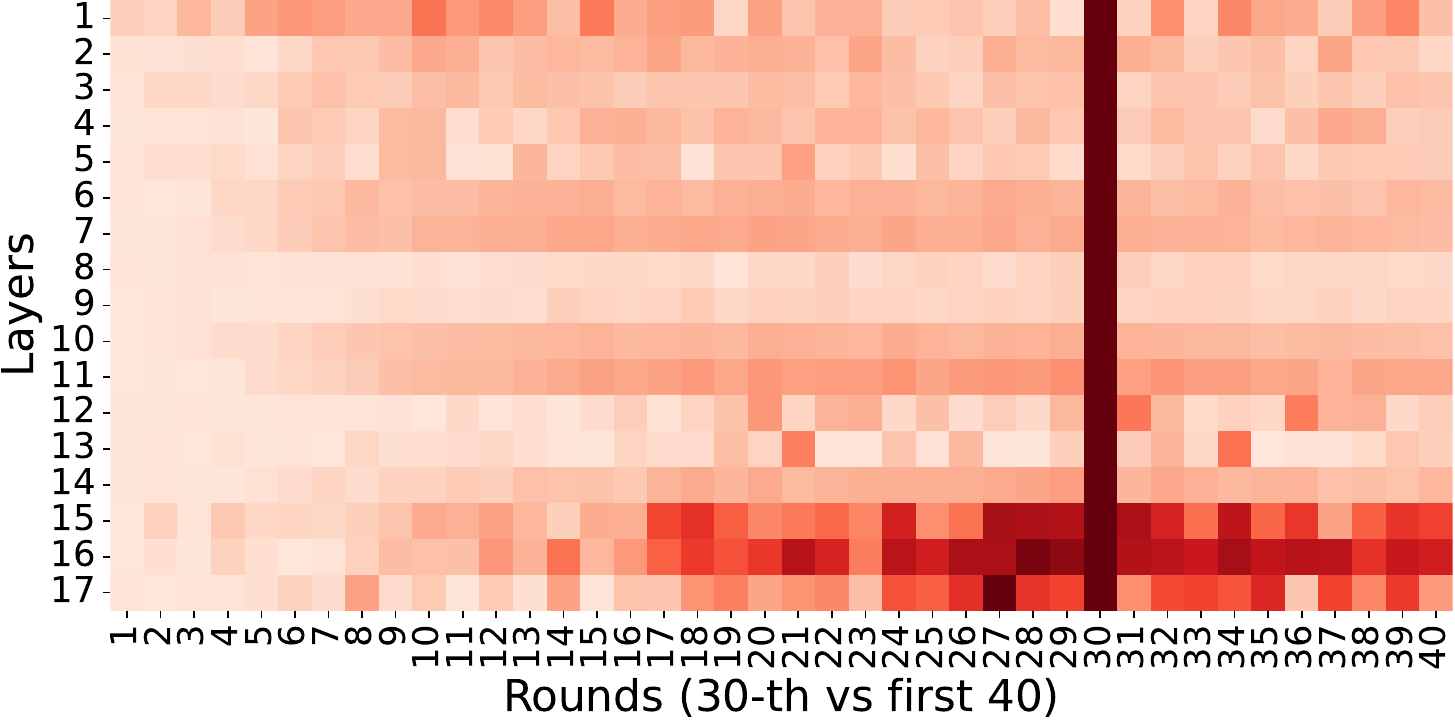}
	\end{minipage}
	\caption{Cosine Similarity Heatmaps of the Client's Gradient Evolution. These heatmaps show the cosine similarity between a client's gradients in the first 40 rounds and specific rounds (5th, 10th, 15th, 20th, 25th, 30th). Darker shades (towards red) indicate higher similarity, and lighter shades (towards white) lower similarity. The deepest red columns indicate self-comparison (cosine similarity = 1). The x-axis represents global rounds, and the y-axis represents model layers, with shallow layers near the raw input.}
	\label{fig:client_similarity}
\end{figure*}
\begin{figure}
	\centering
	\includegraphics[width=0.65\linewidth]{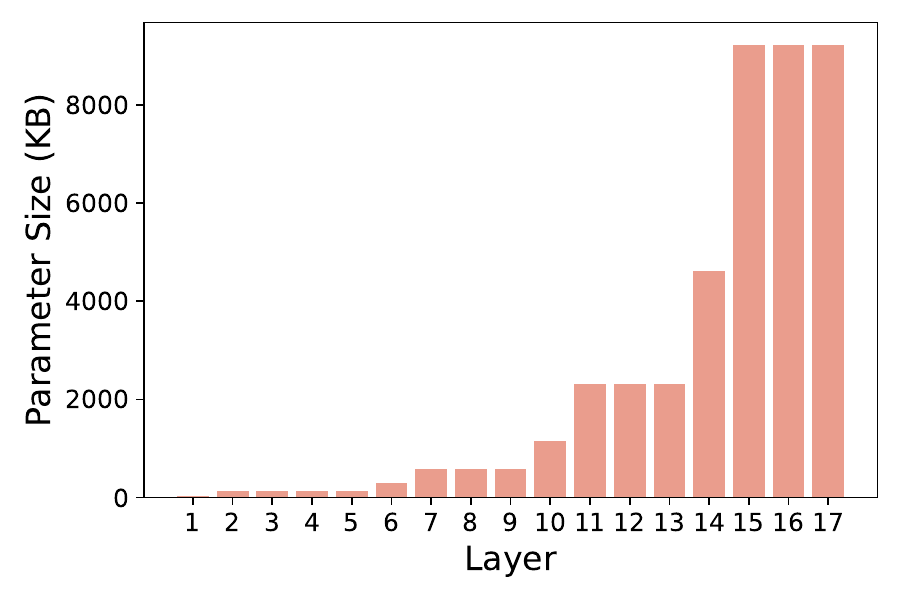}
	\caption{Parameter Size of Each Layer in ResNet18. The x-axis represents the layer index, while the y-axis indicates the parameter size in each layer. }
	\label{fig:layer_params}
\end{figure}

% Intuitively, gradients from the same client across adjacent rounds should also exhibit correlation, which can be exploited for compression. To test this hypothesis, we used cosine similarity, which has been widely applied in machine learning~\cite{pieterse2019evolving,sattler2020byzantine,jeong2022factorized}, to measure the correlation between gradients. We conducted an experiment with ResNet18 to analyze cosine similarity between gradients from different rounds of 10 clients. The results shown in Figure~\ref{fig:client_similarity} reveal that gradients from adjacent rounds exhibit high similarity, especially in deeper layers. As training progresses, shallow layer gradients become more random, while deep layer gradients maintain higher similarity between consecutive rounds. Layers with larger parameter size tend to exhibit 
% Todo: 可能要加个引用

We conduct an empirical analysis to examine whether gradients from the same client across adjacent rounds exhibit exploitable temporal correlation. 
Euclidean distance and mutual information do not directly characterize linear dependence between gradients, whereas subspace-based measures, although capable of doing so, are comparatively expensive and less interpretable.
Therefore, we adopt cosine similarity, a metric that has been widely used in machine learning~\cite{pieterse2019evolving,sattler2020byzantine,jeong2022factorized}, to evaluate the similarity between gradients.
We conducted an experiment with ResNet18 to analyze cosine similarity between gradients from different rounds of 10 clients. Figure~\ref{fig:client_similarity} shows the results for one client randomly selected out of the 10, revealing that gradient similarity evolves over time in a layer-dependent manner. The results reveal that gradients from adjacent rounds exhibit high similarity, indicating a significant temporal correlation. In the early training stage (e.g., round 5 vs. first 40), shallow-layer gradients exhibit relatively higher stability, while deep layers fluctuate more as they have not yet converged. As training proceeds, deep-layer gradients gradually stabilize (e.g., 15-th vs. first 40), maintaining higher similarity between consecutive rounds, whereas shallow-layer gradients become relatively more random due to earlier convergence and smaller updates. This observation is consistent with the hierarchical convergence behavior typically seen in deep learning models~\cite{chen2023layer}.
In addition, as shown in Figure~\ref{fig:client_similarity}, the similarity intensity of different layers exhibits significant differences. The relatively darker regions at the bottom correspond to layers with larger parameter counts, which tend to display stronger temporal correlation. To clarify this relationship, Figure~\ref{fig:layer_params} reports the parameter size of each layer, confirming that these high-similarity regions (e.g., layer 14-17) account for approximately 75\% of the total parameters. These results demonstrate that temporal correlation is not uniform across layers, but rather influenced by both the training stage and layer parameter scale. 
These observations reveals that gradients exhibit strong temporal correlation, which is highly heterogeneous across layers and concentrated in a small subset of parameter-dominant layers that account for the majority of model parameters, making them particularly amenable to compression. Previous works such as Rand-$k$-Temporal~\cite{jhunjhunwala2021leveraging} and FedDCS~\cite{10229032} have exploited the temporal dependency of gradients across rounds. However, Rand-$k$-Temporal implicitly assumes temporal correlation without empirically validating its existence, while FedDCS experimentally confirms the presence of correlation but does not study the factors that affect the strength of the correlation. In contrast, our empirical study provides a layer-wise analysis, demonstrating that temporal correlation is highly heterogeneous and significantly stronger in parameter-dominant layers. This phenomenon is consistently observed across different CNN architectures, including MobileNet and EfficientNet.

Motivated by these findings, we propose GradESTC, which exploits singular value decomposition (SVD) to capture spatial correlations in gradients and adopts a dynamic basis update strategy to exploit temporal correlation across rounds. Moreover, compression is applied primarily to parameter-dominant layers, with more aggressive compression-related hyperparameter settings, thereby reducing computational overhead while improving communication efficiency. To summarize, the main contributions of the work are as follows:

\begin{itemize}
	\item{We conduct an empirical analysis of temporal correlation in FL gradients, showing that gradients from the same client across adjacent rounds exhibit strong temporal correlation, which is highly heterogeneous across layers and is significantly stronger in a small subset of parameter-dominant layers. This observation provides a solid empirical foundation for selectively and layer-wise exploiting temporal correlation in gradient compression.}
	
	\item{We propose GradESTC, a novel approach that exploits spatio-temporal correlations in gradients to achieve efficient communication, naturally suited for non-IID settings. GradESTC maintains a set of basis vectors, representing gradients as combination coefficients of these vectors to capture spatial correlation and dynamically updates these vectors across rounds, capturing temporal correlation with minimal computational overhead.}

	\item{We validate our approach through extensive experiments under both IID and non-IID settings, demonstrating that it achieves communication-optimal performance without compromising learning effectiveness. For example, on CIFAR-10 under the IID setting, it reduces uplink communication by 86.70\% compared to FedAvg while maintaining accuracy on par with the best baselines (see Table~\ref{tab:All_results}).}
	
\end{itemize}

\section{Related Work}
Existing studies have mainly focused on reducing communication overhead through gradient quantization and sparsification, which directly reduce the size of transmitted updates. More recently, researchers have begun to explore the correlations within gradients, providing new opportunities for further compression. In this section, we review work in related areas.

\paragraph{Quantization and Sparsification} Quantization reduces parameter precision to achieve compression. Methods like SignSGD~\cite{bernstein2018signsgd} compress gradients to 1-bit precision. FedPAQ~\cite{pmlr-v108-reisizadeh20a} integrates periodic averaging and quantization, improving communication efficiency while preserving optimality and convergence. Recent approaches, such as QCS-based compression~\cite{li2021communication}, achieve communication-efficient learning by combining quantization and compressive sensing. Sparsification discards less significant gradient components. Methods like Top-k sparsification~\cite{NEURIPS2018_b440509a} transmit only the largest components. STC~\cite{sattler2020robust} combines sparsification, error accumulation, and optimal Golomb coding to compress both uplink and downlink communication efficiently. PruneFL~\cite{jiang2023model} and PD-Adam~\cite{DBLP:journals/tc/YuYZZLCC25} prunes parameters adaptively to reduce overhead.

\paragraph{Correlations in Gradients} Gradient correlation is multifaceted. Early research has pointed out the spatial correlation of gradients. For example, when gradients are vectorized and appropriately stacked into a matrix, they tend to form low-rank structures. Based on this observation, Atomo~\cite{wang2018atomo} was proposed to decompose gradients into sparse atoms, thereby reducing communication overhead. Gradiveq~\cite{yu2018gradiveq} further demonstrated the strong linear correlation among CNN gradients and leveraged principal component analysis to exploit these correlations, significantly reducing gradient dimensionality. GaLore~\cite{zhao2024galore} employed low-rank projection to enable full-parameter learning with reduced memory requirements. Recent studies have investigated gradient correlations across clients for communication-efficient FL. 
LGC~\cite{abrahamyan2021learned} learns a shared gradient representation using a lightweight autoencoder, but it is designed for data-parallel training and is difficult to apply in FL due to limited bandwidth and infrequent communication. 
Inspired by LGC, SVDFed~\cite{wang2023svdfed} captures shared gradient representations across clients via SVD, avoiding autoencoder training and making it more suitable for FL. 
FedOComp~\cite{xue2022fedocomp} exploits gradient correlation through principal component analysis and introduces a two-timescale mechanism that periodically updates a global compression kernel for over-the-air aggregation. 
In addition, temporal correlation of gradients across communication rounds have also been investigated. 
FedDCS~\cite{10229032} further confirms temporal correlation in client gradients and applies it to compressive sensing for communication-efficient FL. 
FedTC~\cite{guan2024fedtc} exploits both spatial and temporal correlations by operating in a fixed transform domain combined with sparsification; however, its reliance on predefined transforms, which does not guarantee optimal energy compression for arbitrary gradient distributions. 

% Comparison with FedTC []: FedTC leverages spatiotemporal alignment in the transform domain (e.g., Discrete Cosine Transform, DCT) combined with Top-k sparsification.
% 	Distinction 1: Domain and Basis Selection. FedTC operates in the frequency domain using a fixed, predefined basis (DCT), assuming that gradients are inherently smooth (similar to natural images). GradESTC operates in the spatial domain using Randomized SVD. Instead of assuming a fixed structure (like smoothness), our method derives a data-adaptive basis based on the specific statistical properties of the current gradients.
% 	Distinction 2: Theoretical Optimality. A fixed basis (like DCT used in FedTC) cannot guarantee optimal energy compaction for arbitrary gradient distributions, especially in complex deep learning models. SVD (used in GradESTC) is theoretically proven to provide the optimal low-rank approximation in terms of minimizing the Mean Squared Error. By capturing the directions of maximum variance, GradESTC achieves lower reconstruction error for the same compression ratio compared to fixed transforms.

% FedDCS~\cite{10229032} 首次在联邦学习中证实了客户端内部梯度的时间相关性，并将这种时间相关性应用于压缩感知，实现了通信高效的联邦学习。
% In our work, we focus on Spatio-Temporal correlations in gradients, which also offers compression opportunities.

\paragraph{Low-Rank Approximation} Some studies have explored low-rank methods in FL~\cite{nguyen2024towards,hyeonfedpara}, which approximate high-dimensional matrices with lower-dimensional representations. SVD is one of the commonly used techniques, and some recent methods have already applied SVD to reduce communication overhead in FL~\cite{10707306}. However, the full SVD is computationally expensive for large matrices~\cite{kishore2017literature}, making it impractical for the resource-constrained environments typical of FL. Randomized SVD~\cite{halko2011randomized} alleviates this by using random projections to approximate the top singular values and vectors, offering a faster and computationally efficient alternative, making it suitable for FL. 
% 一些方法使用低秩方法去捕获相关性以减少通信开销。
Several methods, such as SVDFed~\cite{wang2023svdfed} and FedOComp~\cite{xue2022fedocomp}, apply low-rank approximation to obtain globally shared kernels, which is effective for homogeneous data. However, such a global representation may struggle to adapt to non-IID data distributions and evolving client-specific gradient statistics.

Motivated by the above observations and limitations, GradESTC adopts a client-specific, dynamically updated low-rank representation to exploit both spatial and temporal correlations of gradients.

\section{Our Approach}

The overall framework of GradESTC consists of compressors and decompressors. Each client has multiple compressors, corresponding to the selected layers to be compressed in the model. The server has a corresponding decompressor for each compressor, which is used to decompress the information from the specific compressor. Next, we introduce the methods for compression and basis vector maintenance within a compressor-decompressor pair. The corresponding pseudocode for the compressor and decompressor algorithms is given in Algorithms~\ref{alg:algo1} and~\ref{alg:algo2}, respectively.

\begin{table}[b]
	\centering
	\caption{Summary of Key Notations}
	\label{tab:notation}
	\begin{tabular}{ll}
	\toprule
	\textbf{Symbol} & \textbf{Description} \\
	\midrule
	$d$ & Number of candidate basis vectors from fitting error \\
	$k$ & Number of retained basis vectors \\
	$l$ & Row dimension of the reshaped gradient matrix \\
	$m$ & Column dimension of the reshaped gradient matrix \\
	$g$ & The original gradient vector \\
	$\alpha, \beta$ & Parameters for dynamically adjusting $d$ \\
	$R$ & Contribution scores of basis vectors \\
	$G \in \mathbb{R}^{l \times m}$ & The reshaped gradient matrix \\
	$M \in \mathbb{R}^{l \times k}$ & The basis matrix composed of $k$ orthogonal basis vectors \\
	$A \in \mathbb{R}^{k \times m}$ & The combination coefficients for the basis matrix $M$ \\
	$E \in \mathbb{R}^{l \times m}$ & The fitting error matrix, $E = G - M A$ \\
	$\mathbb{P}$ & Indices of basis vectors in $M$ to be replaced \\
	$\mathbb{M}$ & New basis vectors selected from fitting error \\
	$\mathbb{A}$ & New combination coefficients selected from fitting error \\
	$\mathbb{C}$ & Total communication overhead per iteration \\
	\bottomrule
	\end{tabular}
\end{table}

\begin{figure}[!ht]
 	\centering
	\includegraphics[width=0.7\linewidth]{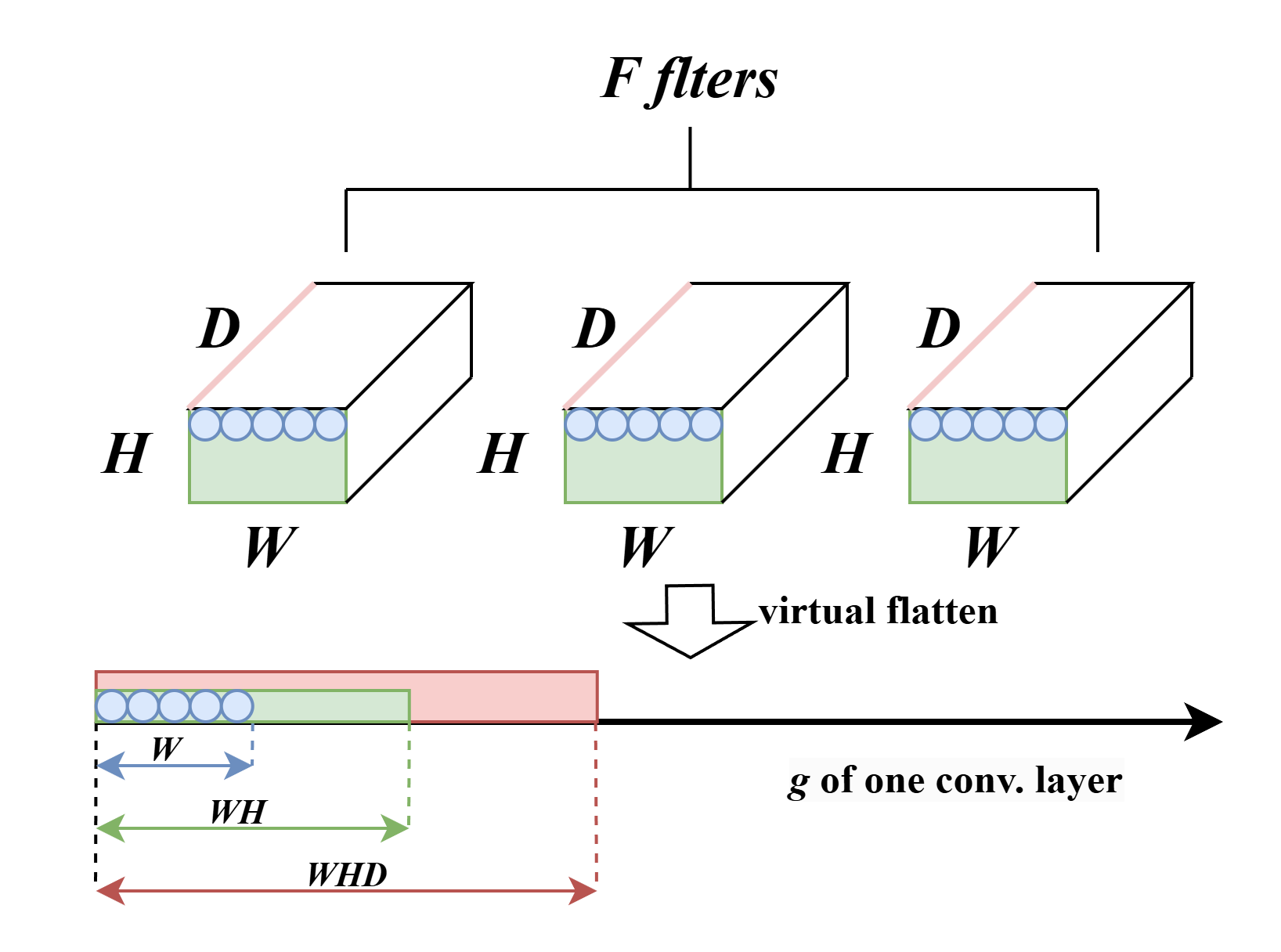}
	\caption{Gradient flattening with WHDC ordering.}
	\label{fig:gradient_preprocessing}
\end{figure}

\subsection{Compression by Short Basis Vectors} \label{sec:compression}
In this section, we assume that a set of orthogonal basis vectors has been obtained to represent the main components of the gradient and introduce the compression process. 

\paragraph{Gradient Preprocessing} 
To explicitly exploit the spatial correlation inherent in gradients, we first reshape each gradient tensor into a matrix that preserves its natural structural boundaries (e.g., channels or kernels). 
Such reshaping transforms the layer-wise gradient tensor into a 2D representation, where spatially or structurally related elements are arranged contiguously. 
This design allows subsequent low-rank approximation to capture correlations among spatially neighboring parameters, rather than treating all entries as independent.

Specifically, gradients are converted into a matrix of a specified shape before compression and are restored after decompression. 
In real-world scenarios, gradients are typically multi-dimensional tensors (e.g., convolutional kernels with four dimensions: Width $W$, Height $H$, Depth $D$, and Channel $C$). 
To facilitate matrix-based compression, we first flatten these tensors into a 1D vector $g \in \mathbb{R}^n$ using the WHDC ordering, as illustrated in Figure~\ref{fig:gradient_preprocessing}. 
Then, the vector $g$ is reshaped into a matrix $G \in \mathbb{R}^{l \times m}$ with $n = l \times m$, where each column $G_{:,j}$ is constructed from consecutive segments of $g$: 
$G_{:,j} = [ g_{(j-1)l + 1}, g_{(j-1)l + 2}, \ldots, g_{jl} ]^\top$ for $j = 1, 2, \ldots, m$.

The reshaping dimension $l$ is carefully chosen so that each column of $G$ aligns with natural spatial or structural boundaries in the network, such as convolutional kernels or feature map channels. 
This alignment preserves the local structure within the gradient and enhances the model's ability to capture spatial correlations~\cite{wang2018atomo}. 
Consequently, the compression process becomes more efficient by representing $G$ as a combination of a few basis vectors that encode the dominant spatial patterns. 
After decompression, the reconstructed matrix $\hat{G}$ is reshaped back to the original tensor form to restore the gradient.

\begin{algorithm}[!t]
    \caption{The compressor on the client side}
    \label{alg:algo1}
    \textbf{Input}: The new gradient $g$ of specified layer\\
    \textbf{Parameter}: $M$, $k$, $d$, $\alpha$, $\beta$\\
    \textbf{Output}: $\mathbb{P}$, $\mathbb{M}$, $A$
    \begin{algorithmic}[1] %[1] enables line numbers
        \STATE $G$ = The segmented matrix from $g$.
        % \STATE Do some action.
        \IF {$M$ not initialized}
            % \STATE // Initialize the Basis Vectors
            \STATE $U, \Sigma, V$ = Randomized SVD $E$ for first $k$ items.
            \STATE $M$ = The first $k$ columns of $U$.
            \STATE $A$ = The first $k$ rows of $\Sigma V^\top$.
            \STATE $\mathbb{P}$ = The set of the column indices of $M$.
            \STATE $\mathbb{M}$ = The set of the column vectors of $M$.
            \STATE $d = k$ .
        \ELSE
            % \STATE // Dynamic Update the Basis Vectors
            \STATE $A = M^\top G$.
            \STATE $E = G - M A$.
            \STATE $U^e, \Sigma^e, V^e$ = Randomized SVD $E$ for first $d$ items.
            \STATE $M^e$ = The first $d$ columns of $U^e$.
            \STATE $A^e$ = The first $d$ rows of $\Sigma^e {V^e}^\top$.
			\STATE $M^{oe} = [M, M^e]$.
			\STATE $A^{oe} = [A^\top, {A^e}^\top]^\top$.
			% \STATE $R = [\| A^{oe}_{1,:} \|^2, \| A^{oe}_{2,:} \|^2, \ldots, \| A^{oe}_{k+d,:} \|^2]$.
            \STATE $R_k$ = Top $k$ elements in $[\| A^{oe}_{1,:} \|^2, \ldots, \| A^{oe}_{k+d,:} \|^2]$.
            \STATE Initialization $\mathbb{P}$, $\mathbb{M}$ and $\mathbb{A}$ as empty sets.
			\FOR{$i=1$ to $k+d$}
				\IF{$i \leq k$ and $R[i] \notin R_k$}
					\STATE $\mathbb{P}$ = $\mathbb{P} \cup \{i\}$.
				\ENDIF
				\IF{$i > k$ and $R[i] \in R_k$}
					\STATE $\mathbb{M} = \mathbb{M} \cup \{M^{oe}_{:,i}\}$.
					\STATE $\mathbb{A} = \mathbb{A} \cup \{A^{oe}_{i,;}\}$.
				\ENDIF
			\ENDFOR
			\STATE Update $M, A$ according to Formula~\ref{eq:update} using $\mathbb{P}, \mathbb{M}, \mathbb{A}$.
			\STATE $d = \text{min}(\alpha \text{sizeof}(\mathbb{M}) + \beta, k)$.
        \ENDIF
        \STATE \textbf{return} $\mathbb{P}, \mathbb{M}, A$.
    \end{algorithmic}
\end{algorithm}

\paragraph{Compression} To compress the gradient matrix $ G $, the basis vectors stacked into the matrix $ M \in \mathbb{R}^{l \times k} $ are used to represent the gradient matrix $ G $, where each column of $M$ represents a basis vector. The goal is to find the combination coefficients $ A \in \mathbb{R}^{k \times m} $ such that the reconstructed gradient $ \hat{G} = M A $. Our objective is to minimize the reconstruction error, expressed as the following optimization problem:
\begin{align}
	\min_A \|G - M A\|^2.
\end{align}
Taking the derivative of the objective function with respect to $A$, we obtain
\begin{align}
	\frac{\partial}{\partial A}\|G - M A\|^2
	&= \frac{\partial}{\partial A}\operatorname{Tr}\left[(G - M A)^\top (G - M A)\right] \nonumber \\
	&= -2M^\top(G - M A).
\end{align}
Setting the derivative to zero yields the first-order optimality condition
\begin{align} \label{eq:me_zero}
M^\top(G - M A) = 0.
\end{align}
Since the columns of $ M $ are orthonormal (i.e., $ M^\top M = I $), the solution for $A$ is:
\begin{align}
	A = M^\top G, 
	\label{eq:alpha}
\end{align}
where the optimal combination coefficients $A$ are obtained. In GradESTC, the compressed representation consists of $ M $ and $ A $. The values of $A$ are transmitted to the server, while $ M $ is maintained as described in Section~\ref{sec:basis_vectors}.

\paragraph{Decompression} On the decompressor, the gradients are reconstructed using the basis matrix $ M $ and the combination coefficients $ A $ as $ \hat{G} = M A $.
Finally, $ \hat{G} $ is reshaped back into the shape of the original gradient, which serves as the decompressed gradient. This reconstruction process ensures that the original gradient is approximated with minimal loss.

\subsection{Maintain Basis Vectors} \label{sec:basis_vectors}
We employ SVD as a tool to factorize the reshaped gradient matrix and construct the basis matrix $M$, which provides the optimal low-rank approximation under a mean squared error criterion. However, as training progresses, gradient statistics gradually evolve, rendering a static basis increasingly suboptimal. Recomputing a full SVD and re-transmitting an entirely new basis matrix at every round would therefore incur substantial computational and communication overhead.

Motivated by the empirical analysis in Section~\ref{sec:introduction} (Figures~\ref{fig:client_similarity} and~\ref{fig:layer_params}), which shows that the gradient direction remains relatively stable across adjacent global rounds, providing a theoretical basis for accurately representing gradients using a set of compact basis vectors that evolves slowly over time. Accordingly, GradESTC retains most previously learned basis vectors locally and dynamically replaces only those whose contribution diminishes relative to the current fitting error, thereby preserving the favorable reconstruction properties of SVD while significantly reducing per-round computation and uplink overhead.

\paragraph{Initialization of Basis Vector} 
At the first compression, the basis matrix $M$ has not yet been obtained. To capture the most significant directions of variance in $ G $ , we apply SVD, which factorizes $ G $ into three components:
\begin{align}
G = U \Sigma V^\top,
\end{align}
where $ U \in \mathbb{R}^{l \times m} $ is an orthogonal matrix whose columns are the left singular vectors of $ G $, $ \Sigma \in \mathbb{R}^{m \times m} $ is a diagonal matrix containing the singular values of $ G $ in descending order, and $ V \in \mathbb{R}^{m \times m} $ is an orthogonal matrix whose columns are the right singular vectors of $ G $. We retain only the first $ k $ columns of $ U $ to form the basis matrix $ M $ for compression. These columns correspond to the $ k $ largest singular values, capturing the most significant variance in $ G $. Furthermore, since $ M $ is constructed from the first $ k $ columns of $ U $, the combination coefficients can be obtained from the first $ k $ rows of $ \Sigma V^\top $ , which is equivalent to being obtained by Formula~\ref{eq:alpha}, thereby reducing the computational burden.

% \paragraph{Compression and Decompression:} 

\begin{algorithm}[!t]
    \caption{The decompressor on the server side}
    \label{alg:algo2}
    \textbf{Input}:  $\mathbb{P}, \mathbb{M}, A$\\
    \textbf{Parameter}: $M$\\
    \textbf{Output}: Reconstructed gradient $\hat{g}$ 
	
    \begin{algorithmic}[1] %[1] enables line numbers
        \STATE Update $M$ according to Formula~\ref{eq:update} using $\mathbb{P}$ and $\mathbb{M}$.
        \STATE $\hat{G} = M^\top A$.
        \STATE $\hat{g}$ = Reshape $\hat{G}$ to the original shape.
        \STATE \textbf{return} $\hat{g}$
    \end{algorithmic}
\end{algorithm}

\paragraph{Incremental Replacement of Basis Vector} 
Based on the temporal correlation of gradients, previously learned basis vectors may continue to effectively represent new gradient information. To reduce communication overhead, we propose an incremental replacement strategy that selectively updates the basis vectors by replacing outdated subsets identified via their indices.

Two key problems arise: (1) maintaining the orthogonality of the basis matrix when updating the basis vectors, and (2) deciding which vectors to replace. To address the first problem, we introduce the vectors in the error basis matrix $ M^e $ from fitting error $E \in \mathbb{R}^{l \times m} $ as candidate basis vectors, which captures the components of $ G $ not represented by $ M $. The fitting error $ E$ is computed as:
\begin{align} \label{eq:get_error}
E = G - M A,
\end{align}
where \( A \) is obtained from Formula~\ref{eq:alpha}.

Next, SVD is applied to \( E \), yielding \( E = U^e \Sigma^e {V^e}^\top \). To construct the error basis matrix \( M^e \), we select the columns of \( U^e \) corresponding to singular values greater than zero and choose the first \( d \) columns, where \( d \leq k \). These columns form \( M^e \), which serves as the set of candidate vectors for updating the existing basis \( M \).
It can be rigorously shown that \( M^e \) is orthogonal to \( M \). We obtain $M^\top E = 0$ from the Formula~\ref{eq:me_zero} and~\ref{eq:get_error}, so we have
\begin{align} 
M^\top e_j = 0, \; \forall j,
\end{align}
where \( e_j \) denotes the \( j \)-th column of \( E \). By the SVD, each column of \( E \) can be expressed as a linear combination of the first \( d \) left singular vectors:
\begin{align} 
e_j = \sum_{i=1}^{d} \sigma_i v_{ij} u_i^e,
\end{align}
where \( \sigma_i \) are the non-zero singular values and \( v_{ij} \) are elements of \( {V^e}^\top \). Consequently, for any \( x \in \operatorname{Col}(M^e) \), we have \( x = \sum_j \alpha_j e_j \), which implies:
\begin{align} 
M^\top x = \sum_j \alpha_j M^\top e_j = 0 \quad \Rightarrow \quad M^\top M^e = 0.
\end{align}
The combination coefficients \( A^e \) for \( M^e \) can be computed as:
\begin{align}
A^e = {M^e}^\top G = {M^e}^\top (E + M A) = {M^e}^\top E,
\end{align}
which are equivalently obtained from the first \( d \) rows of \( \Sigma^e {V^e}^\top \), representing the contribution of the error basis vectors in reconstructing \( G \).

To address the second problem, we use the squared norm of each element in the combination coefficients as an indicator of the contribution of the corresponding basis vector. Let $ A^{oe} = [A^\top, {A^e}^\top]^\top, M^{oe} = [M, M^e] $ and let $ A_{u,:} $ denote the $ u $-th row of $ A $, which contains the combination coefficients of the $ u $-th basis vector. Thus, the total contribution of $ A^{oe} $ is captured by a vector $R$, where each element represents the squared norm of a column in $ A^{oe} $:
\begin{align}
R = [\| A^{oe}_{1,:} \|^2, \| A^{oe}_{2,:} \|^2, \ldots, \| A^{oe}_{k+d,:} \|^2].
\end{align}
Although this indicator does not precisely capture the true contribution of the basis vectors to the reconstruction, it provides a computationally efficient approximation that proves effective in most practical scenarios. The contributions sorted in descending order and selected the top $ k $ value as a set $R_k$. We then form ordered sets \( \mathbb{M} \) and \( \mathbb{A} \) by selecting columns from \( M^e \) and rows from \( A^e \) whose corresponding contribution values are in \( R_k \). The indices of vectors in $ M $ that fall outside the top $ k $ contribution form a set denoted as $ \mathbb{P} $. For each \( p_i \in \mathbb{P} \), the \( i \)-th elements from \( \mathbb{M} \) and \( \mathbb{A} \) are sequentially used to replace the corresponding columns in \( M \) and rows in \( A \), respectively, as formally defined below:
\begin{align}\label{eq:update}
	\begin{split}
		M^*_{:,j} &= 
		\begin{cases} 
		\mathbb{M}[i], & \text{if } j = p_i \in \mathbb{P}, \\
		M_{:,j}, & \text{otherwise},
		\end{cases} \quad \text{for } j = 1, 2, \ldots, k,\\
		A^*_{j,:} &= 
		\begin{cases} 
		\mathbb{A}[i], & \text{if } j = p_i \in \mathbb{P}, \\ 
		A_{j:}, & \text{otherwise},
		\end{cases}	\quad \text{for } j = 1, 2, \ldots, k.
	\end{split}
\end{align}
The sets $ \mathbb{P} $ and $ \mathbb{M} $, along with the updated coefficients $ A^* $, are transmitted from the compressor to the decompressor. Using this information, the decompressor updates its basis matrix $ M $ and reconstructs the gradient with the updated basis vectors and combination coefficients.

\paragraph{Dynamic Adjustment of Candidate Vector Count} A full SVD involves high computational overhead. Fortunately, since only the first $d$ basis vectors are needed in our algorithm, an iterative SVD method, called Randomized SVD~\cite{halko2011randomized}, can be employed to reduce this overhead. The time complexity of Randomized SVD depends on $d$, allowing further reductions by decreasing $d$ as described in Section~\ref{sec:time_complexity}. Therefore, we can dynamically adjust $d$ to further reduce computational overhead. We aim to make $d$ as close as possible to the actual number of updates required, while leaving sufficient margin to ensure enough candidate vectors are available and should adapt quickly to minimize computational overhead. In practice, it has been observed that the number of basis vectors actually replaced in each round (i.e., the cardinality of the set $\mathbb{M}$, defined as $d_r$) changes smoothly. Under these requirements, we use a simple linear adjustment strategy, where $d$ is updated as follows:
\begin{align}
d^* = \alpha  d_r + \beta.
\end{align}
This simple strategy achieves good results in most cases while rapidly adjusting $d$ in situations with significant changes, without introducing additional computational overhead. Empirically, we set $\alpha = 1.3$ and $\beta = 1$, which yielded good performance in our experiments.

\subsection{Compression Level and Time Complexity} \label{sec:time_complexity} 
\paragraph{Compression Level} In GradESTC, the compression level is governed by the number of basis vectors \( k \), while the reconstruction loss under a fixed \( k \) depends on the segment length \( l \). Both \( k \) and \( l \) serve as tunable hyperparameters that can be adapted for different layers based on practical needs. In practice, the data for updating basis vectors is dynamically adjusted based on the correlation between consecutive rounds, whereas the data for combination coefficients remains constant. Based on this, the transmitted data is computed as $ \mathbb{C} $:
\begin{align}
\mathbb{C} = k \frac{n}{l} + d_r l + k \leq k (\frac{n}{l} + l + 1).
\end{align}
Generally, $l$ is set to approximately the square root of $n$, aligning with natural structural boundaries, while $k$ is empirically determined with $k \ll l$. This setup ensures a significant compression, i.e., $\mathbb{C} \ll n$.

\paragraph{Time Complexity} The time complexity analysis of the GradESTC focuses on the dynamic incremental replacement process. 
% The time complexity of GradESTC is primarily affected by the SVD decomposition. 
% For traditional full SVD decomposition, the time complexity of obtaining the first $\mathbf{k}$ singular values and vectors is $O(\min(m, l) \cdot m \cdot l)$. Such high complexity is unacceptable.
% \begin{theorem}
% 	Given a matrix $G \in \mathbb{R}^{l \times m}$ and the hyperparameter $k$ and $d$, the time complexity of GradESTC is $O(\lambda \cdot l \cdot m)$, where $\lambda = 2k + 6d + 1$.
% \end{theorem}
% \begin{proof}
	%  For Randomized SVD, the time complexity is given by ~\cite{halko2011randomized}: $\mathbf{k}(2q + 2) T_{\text{mult}} + O(\mathbf{k}^2 (l + m))$, where $T_{\text{mult}}$ represents the time complexity of a matrix-vector multiplication, $q$ is typically 1 or 2, and $T_{\text{mult}}$ is usually $O(l \cdot m)$. Therefore, the time complexity of Randomized SVD is:  
	% \begin{align} \label{eq:svd_complexity}
	% O(6\mathbf{k} \cdot l \cdot m + \mathbf{k}^2 (l + m)).
	% \end{align}
	% Typically, $\mathbf{k} \ll \min(m, l)$, making this far less complex than full SVD. 
In this process, the fitting error $E$ is computed as Formula~\ref{eq:get_error}, which has a time complexity of $O((2k + 1) l m)$. The time complexity of performing Randomized SVD on $E$ to extract the first $d$ components is $O(\log{(d)}lm  + d^2 (l + m))$, as stated in~\cite{halko2011randomized}. Calculating $A^e$ and $R$ has a time complexity of $O(d m+(k + d) m)$ and the remaining operations can be ignored. Thus, the total time complexity of the dynamic incremental replacement process is as follows: 
\begin{align}
	O\left((2k + \log{(d)} + 1 + \frac{d + k}{l}) l m + d^2 (l + m)\right).
\end{align}
Since $d < k$ and $k \ll l$, certain terms can be ignored, and the complexity can be approximated as $O\left(2k l m+d^2 (l + m)\right)$.
% \end{proof}
This analysis shows that the time complexity of GradESTC primarily depends on the parameters $k$ and $d$, which can be adjusted to manage computational overhead. During dynamic adjustment, $d$ is adapted as needed, reducing computational overhead significantly while preserving high accuracy.

\section{Convergence Analysis} \label{sec:convergence_analysis}

We consider the optimization problem of FL, where the global objective function is defined as $f(x) \triangleq \frac{1}{N} \sum_{i=1}^{N} f_i(x)$, with $f_i(x)$ representing the local objective function of the $i$-th client and $N$ being the total number of clients. The goal is to minimize the global objective function $f(x)$ by updating the global model $x$ based on local gradients. We analyze the convergence of GradESTC under the non-convex FL setting.

\paragraph{Notation and Assumptions} This paper assumes that each client can locally observe unbiased independent stochastic gradients given by $ g_i^t \triangleq g_i(x^t) $, with $ \mathbb{E}[g_i^t] = \nabla f_i(x^t) $, where $t=1,\ldots,T $ represents the round step. The global model $\bar{x}^t$ is updated by aggregating the local gradients $g_i^t$ using the FedAvg algorithm. In FedAvg, each client updates its local model for $I$ steps: $x_i^{t+1} = x_i^t - \eta \nabla f_i(x_i^t)$, where $\eta$ is the learning rate. The global model is updated every $I$ steps by aggregating the local models as $\bar{x}^{t} = \frac{1}{N} \sum_{i=1}^{N} x_i^{t}$. For steps where $t \mod I \neq 0$, global averaging is not performed; however, these intermediate local updates are still included in the theoretical analysis for convergence derivation. For convenience, we define $r = \lfloor \frac{t}{I} \rfloor$ as the global round. Let \( G_i^{t,r} \) denote the segmented gradient matrix at the \( t \)-th step (which belongs to global round \(r\)). The aggregate segmented gradient matrix of client \( i \) in round \( r \) is then defined as \( G_i^{*,r} = \sum_{\tau = Ir}^{Ir + I} G_i^{\tau,r} \). In our method, the reconstructed gradient is denoted by $ \hat{g} $, and the error is defined as $ e = g - \hat{g} $, which satisfies $ \langle e, \hat{g} \rangle = 0$. 
The assumptions for the analysis are as follows: \\
\textbf{Assumption 1.} (Lipschitz Smooth) \textit{Each function $f_i(x)$ is smooth with modulus $L$, such that} $\| \nabla f_i(x) - \nabla f_i(y) \| \leq L \| x - y \|, \forall x, y, \forall i.$\\
\textbf{Assumption 2.} (Bounded Variance) \textit{There exist constants $\sigma > 0$ such that} $\mathbb{E} [\|g_i(x) - \nabla f_i(x)\|^2] \leq \sigma^2,\forall x, \forall i.$ \\
\textbf{Assumption 3.} (Bounded Gradients) \textit{There exist constants $\rho > 0$ such that} $\mathbb{E} [\|g_i(x)\|^2] \leq \rho^2, \forall x, \forall i.$ \\
\textbf{Assumption 4.} (Gradient Correlation) \textit{
	Let \(M_i^{r}\) be an orthonormal basis of the rank-\(k\) subspace spanned by the top-\(k\) left singular vectors of \(G_i^{*,r-1}\). 
	Define the subspace concentration ratio as
	\[
	\chi_k(G_i^{t,r}, G_i^{*,r-1}) = \frac{\|{M_i^r}^\top G_i^{t,r}\|}{\|G_i^{t,r}\|}.
	\]
	We assume that the expected subspace concentration satisfies
	\(
	\mathbb{E}\big[\chi_k^2(G_i^{t,r}, G_i^{*,r-1})\big] \ge \delta^2,
	\)
	where \(0 \le \delta \le 1\) quantifies the minimum correlation between the current gradient matrix and the subspace spanned by the previous round's gradients.
}

Assumption 1-3 are standard in the FL setting, providing the foundation for convergence analysis~\cite{wang2023svdfed,10229032,yu2019parallel}. Assumption 4, inspired by~\cite{gur2018gradient}, formalizes the empirical observation that the gradient matrix of the current round is concentrated in a low-dimensional subspace learned from past rounds. Moreover, Assumption 4 is consistent with our method: by incrementally updating basis vectors, GradESTC ensures that the orthonormal basis \(M\) remains a good approximation of the top-\(k\) subspace of \(G_i^{*,r-1}\).
% 强调Assumption 4的定义方式是受到~\ref{gur2018gradient}启发
We also define the inter-client gradient error correlation as
\(
\tau \triangleq \max_{i \neq j} \mathbb{E}[\langle e_i^{t}, e_j^{t} \rangle],
\)
which measures the degree of correlation between the compression-induced errors of different clients. By definition, $\tau \in [0, (1-\delta^2)\rho^2]$, where the upper bound corresponds to the case of highly correlated client errors.
A small value of $\tau$ indicates that the clients' errors are nearly independent, whereas a large $\tau$ implies strong correlation among client errors.

\paragraph{Main Results}
The convergence proofs of the following theorems are based on these assumptions and are inspired by the analytical framework in \cite{yu2019parallel}. We have the following main results:

\begin{theorem}	\label{thm:e4error}
	Under Assumption 4, the expected gradient reconstruction error in GradESTC is bounded as: 
	\[
	\mathbb{E}[\|e_i^{t}\|^2] = \mathbb{E}[\|G_i^{t,r} - {M_i^{r}}^\top G_i^{t,r}\|^2] \leq \big(1-\delta^2\big)\rho^2,
	\]
	and the average reconstruction error across all clients satisfies
	\[
	\mathbb{E}[\|\frac{1}{N}\sum_{i=1}^{N} e_i^{t}\|^2] \leq \frac{1}{N}\big((1 -\delta^2)\rho^2 + (N-1) \tau\big).
	\]

\end{theorem}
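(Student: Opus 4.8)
The plan is to recognize the reconstruction error as the residual of an orthogonal projection onto $\mathrm{Col}(M_i^{r})$, use the Pythagorean identity to tie its squared norm to the subspace concentration ratio $\chi_k$, and then take expectations using Assumptions~3 and~4; the averaged bound then follows by expanding the square and bounding cross terms with $\tau$. First I would recall from Formula~\ref{eq:alpha} that the compressor sets $A = {M_i^{r}}^\top G_i^{t,r}$, so the reconstruction is $\hat{G}_i^{t,r} = M_i^{r} A = M_i^{r}{M_i^{r}}^\top G_i^{t,r}$. Since $M_i^{r}$ has orthonormal columns, $P \triangleq M_i^{r}{M_i^{r}}^\top$ is the orthogonal projector onto $\mathrm{Col}(M_i^{r})$, and the error (reshaped back to a vector, which preserves the Euclidean norm) is $e_i^{t} \leftrightarrow G_i^{t,r} - \hat{G}_i^{t,r} = (I-P)G_i^{t,r}$; this is the object the theorem statement intends by $G_i^{t,r} - M_i^{r}{M_i^{r}}^\top G_i^{t,r}$.

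Next I would apply the Pythagorean identity for orthogonal projectors, $\|G_i^{t,r}\|^2 = \|P G_i^{t,r}\|^2 + \|(I-P)G_i^{t,r}\|^2$, and note that $\|P G_i^{t,r}\|^2 = \|{M_i^{r}}^\top G_i^{t,r}\|^2$ because the columns of $M_i^{r}$ are orthonormal. By the definition in Assumption~4, $\|{M_i^{r}}^\top G_i^{t,r}\|^2 = \chi_k^2(G_i^{t,r}, G_i^{*,r-1})\,\|G_i^{t,r}\|^2$, so rearranging gives the pointwise identity $\|e_i^{t}\|^2 = \big(1 - \chi_k^2(G_i^{t,r}, G_i^{*,r-1})\big)\|G_i^{t,r}\|^2$. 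Taking expectations and combining Assumption~4 ($\mathbb{E}[\chi_k^2]\ge\delta^2$) with Assumption~3 ($\mathbb{E}[\|G_i^{t,r}\|^2] = \mathbb{E}[\|g_i^{t}\|^2] \le \rho^2$) yields $\mathbb{E}[\|e_i^{t}\|^2] \le (1-\delta^2)\rho^2$.

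For the averaged error I would expand $\big\|\tfrac1N\sum_{i=1}^N e_i^{t}\big\|^2 = \tfrac1{N^2}\sum_{i=1}^N \|e_i^{t}\|^2 + \tfrac1{N^2}\sum_{i\neq j}\langle e_i^{t}, e_j^{t}\rangle$, take expectations, bound each of the $N$ diagonal terms by $(1-\delta^2)\rho^2$ via the first part, and bound each of the $N(N-1)$ off-diagonal terms by $\tau$ via its definition $\tau \triangleq \max_{i\neq j}\mathbb{E}[\langle e_i^{t}, e_j^{t}\rangle]$. Collecting terms gives $\tfrac1{N^2}\big(N(1-\delta^2)\rho^2 + N(N-1)\tau\big) = \tfrac1N\big((1-\delta^2)\rho^2 + (N-1)\tau\big)$, as claimed.

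The main obstacle is the expectation step: the pointwise identity involves the product $\big(1-\chi_k^2\big)\|G_i^{t,r}\|^2$, so passing rigorously to $\mathbb{E}[\|e_i^{t}\|^2] \le (1-\delta^2)\rho^2$ requires either reading Assumption~4 as an almost-sure lower bound $\chi_k^2 \ge \delta^2$ — so that $\big(1-\chi_k^2\big)\|G_i^{t,r}\|^2 \le (1-\delta^2)\|G_i^{t,r}\|^2$ holds sample-wise before taking $\mathbb{E}$ — or supplying a decorrelation/independence argument between $\chi_k^2$ and $\|G_i^{t,r}\|^2$ so that $\mathbb{E}[\chi_k^2\|G_i^{t,r}\|^2] \ge \delta^2\,\mathbb{E}[\|G_i^{t,r}\|^2]$. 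I would make this interpretation explicit in the statement of Assumption~4, since the product of expectations does not otherwise factor as the bound implicitly uses.
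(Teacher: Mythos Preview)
Your proposal is correct and follows essentially the same route as the paper: write the error as the residual of the orthogonal projection onto $\mathrm{Col}(M_i^r)$, use the Pythagorean identity to get $\|e_i^t\|^2 = (1-\chi_k^2)\|G_i^{t,r}\|^2$, take expectations, and then expand the squared average and bound the cross terms by $\tau$. The paper's proof is line-for-line the same, including the passage from $\mathbb{E}[(1-\chi_k^2)\|G_i^{t,r}\|^2]$ directly to $(1-\delta^2)\rho^2$; your closing remark about this step needing an almost-sure reading of Assumption~4 (or a decorrelation argument) is a fair observation that applies equally to the paper's own derivation.
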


\begin{theorem}	\label{thm:convergence}
	Under Assumptions 1-3 and Theorem~\ref{thm:e4error}, for all $T > 1$, the following inequality holds:
	\begin{align*}
		&\frac{1}{T} \sum_{t=1}^{T} \mathbb{E}[\|\nabla f(\overline{x}^{t-1})\|^2] \leq \frac{4}{\eta T} (f(\overline{x}^{0}) - f^*)+ \frac{4 \eta \sigma^2}{N} \nonumber \\
		& + 16L^2 \eta^2 I^2 \rho^2 + \frac{4(1 + L\eta)}{N}\bigg((1 -\delta^2)\rho^2 + (N-1) \tau\bigg).
	\end{align*}
\end{theorem}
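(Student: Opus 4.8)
The plan is to follow the classical analysis of parallel restarted SGD (the framework of~\cite{yu2019parallel}) and to insert the compression-error terms, which are then controlled by Theorem~\ref{thm:e4error}. First I would introduce the virtual averaged iterate $\bar{x}^t = \frac{1}{N}\sum_{i=1}^{N} x_i^t$ and observe that, because the subspace projection used for compression is linear, the averaged sequence obeys a single recursion of the form $\bar{x}^{t} = \bar{x}^{t-1} - \frac{\eta}{N}\sum_{i=1}^{N}\big(g_i^{t-1} - e_i^{t-1}\big)$, where $g_i^{t-1}$ is the local stochastic gradient with $\mathbb{E}[g_i^{t-1}] = \nabla f_i(x_i^{t-1})$ and $e_i^{t-1} = g_i^{t-1} - \hat{g}_i^{t-1}$ is the reconstruction error, satisfying $\langle e_i^{t-1}, \hat{g}_i^{t-1}\rangle = 0$. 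Since each $f_i$ is $L$-smooth, so is $f$, and I would start from the descent inequality $\mathbb{E}[f(\bar{x}^{t})] \le \mathbb{E}[f(\bar{x}^{t-1})] + \mathbb{E}\langle \nabla f(\bar{x}^{t-1}),\, \bar{x}^{t}-\bar{x}^{t-1}\rangle + \frac{L}{2}\mathbb{E}\|\bar{x}^{t}-\bar{x}^{t-1}\|^2$, taking conditional expectation over the stochastic gradients and then total expectation.

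For the linear term I would substitute the recursion and use unbiasedness to split it into three pieces: (i) a descent term giving, via the polarization identity, $-\frac{\eta}{2}\|\nabla f(\bar{x}^{t-1})\|^2 - \frac{\eta}{2}\|\frac{1}{N}\sum_i \nabla f_i(x_i^{t-1})\|^2$; (ii) a client-drift consistency term bounded by $\frac{\eta L^2}{2N}\sum_i \|\bar{x}^{t-1}-x_i^{t-1}\|^2$, obtained by writing $\nabla f(\bar{x}^{t-1}) = \frac{1}{N}\sum_i \nabla f_i(\bar{x}^{t-1})$ and applying $L$-smoothness; and (iii) the compression inner-product term $\eta\,\mathbb{E}\langle \nabla f(\bar{x}^{t-1}), \frac{1}{N}\sum_i e_i^{t-1}\rangle$, which I would bound by Young's inequality as $\frac{\eta}{8}\|\nabla f(\bar{x}^{t-1})\|^2 + 2\eta\,\mathbb{E}\|\frac{1}{N}\sum_i e_i^{t-1}\|^2$ and then control the last factor by Theorem~\ref{thm:e4error}. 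For the quadratic term I would use $\|\bar{g}^{t-1}-\bar{e}^{t-1}\|^2 \le 2\|\bar{g}^{t-1}\|^2 + 2\|\bar{e}^{t-1}\|^2$, decompose $\bar{g}^{t-1}$ into its mean and zero-mean noise to get $\mathbb{E}\|\bar{g}^{t-1}\|^2 \le \frac{\sigma^2}{N} + \|\frac{1}{N}\sum_i \nabla f_i(x_i^{t-1})\|^2$ using Assumption~2 and cross-client independence, and again invoke Theorem~\ref{thm:e4error} for $\mathbb{E}\|\bar{e}^{t-1}\|^2$.

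Next I would establish the standard FedAvg local-drift bound: since all clients are synchronized to $\bar{x}^{Ir}$ at the start of each round $r$ and then run uncompressed local SGD, $x_i^{t}-\bar{x}^{t}$ is a sum of at most $I$ increments $-\eta(g_i^{\tau}-\bar{g}^{\tau})$, so by Cauchy--Schwarz and Assumption~3 one obtains $\frac{1}{N}\sum_i \mathbb{E}\|x_i^{t}-\bar{x}^{t}\|^2 \le 4\eta^2 I^2\rho^2$ up to a constant; substituting this into piece (ii) yields an $O(L^2\eta^3 I^2\rho^2)$ contribution. Then I would collect all terms, impose a mild step-size condition (of the order $\eta \le \tfrac{1}{2L}$) so that the positive $\|\frac{1}{N}\sum_i \nabla f_i(x_i^{t-1})\|^2$ term from the quadratic part is absorbed by the negative one from piece (i), leaving a net coefficient of $\tfrac{1}{4}\eta$ on $\mathbb{E}\|\nabla f(\bar{x}^{t-1})\|^2$. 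Summing over $t=1,\dots,T$, telescoping $\sum_t(\mathbb{E}[f(\bar{x}^{t})]-\mathbb{E}[f(\bar{x}^{t-1})]) = \mathbb{E}[f(\bar{x}^{T})] - f(\bar{x}^0) \ge f^* - f(\bar{x}^0)$, dividing by $\tfrac{1}{4}\eta T$, and grouping $(1+L\eta)$ as the common factor multiplying the Theorem~\ref{thm:e4error} error bound $\frac{1}{N}\big((1-\delta^2)\rho^2 + (N-1)\tau\big)$ then produces exactly the stated inequality.

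I expect the main obstacle to be the handling of the compression error $\bar{e}^{t-1}$: unlike stochastic gradient noise it is not zero-mean, being a deterministic projection residual, so it cannot be dropped in the first-order descent term and must instead be carried through Young's inequality. This couples the choice of Young's constants to both the $\tfrac{1}{4}$ prefactor and the $(1+L\eta)$ multiplier, and making these constants line up with the claimed bound, while keeping the error contribution expressed through the tight per-round quantity $(1-\delta^2)\rho^2 + (N-1)\tau$ rather than a looser surrogate, is the delicate step. A secondary technical point is confirming that the linearity of the subspace projection lets the per-round aggregate compression error of $G_i^{*,r}$ be written as the sum of the per-step errors, so that the per-step bound of Theorem~\ref{thm:e4error} applies term-by-term in the telescoped sum; and, more subtly, that the approximation $M_i^r \approx$ top-$k$ subspace of $G_i^{*,r-1}$ maintained by the incremental-replacement rule is what makes Assumption~4 (and hence Theorem~\ref{thm:e4error}) consistent with the actual algorithm.
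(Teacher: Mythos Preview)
Your proposal is correct and follows essentially the same route as the paper's proof: descent lemma on the virtual average $\bar{x}^t$, polarization on the inner product, Young's inequality for the compression cross-term, the $4\eta^2 I^2\rho^2$ drift bound, the step-size condition $\eta\le 1/(2L)$ to absorb the positive squared-gradient term, telescoping, and finally plugging in Theorem~\ref{thm:e4error}. The only discrepancy is your specific Young's split $\tfrac{\eta}{8}\|\nabla f\|^2 + 2\eta\|\bar e\|^2$, which would produce a $4(2+L\eta)$ prefactor on the error term rather than the claimed $4(1+L\eta)$; the paper uses $\alpha=2$ in Young's inequality, i.e.\ $\tfrac{\eta}{4}\|\nabla f\|^2 + \eta\|\bar e\|^2$, which combined with $-\tfrac{\eta}{2}\|\nabla f\|^2$ yields exactly the $-\tfrac{\eta}{4}$ coefficient and the $(1+L\eta)$ multiplier you want.
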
 
The detailed proofs of Theorems~\ref{thm:e4error} and~\ref{thm:convergence} are provided in the Appendix. Following Theorem~\ref{thm:convergence}, we present a corollary as follows:

% \paragraph{Discussion}
% Theorem~\ref{thm:convergence} indicates that GradESTC converges to a neighborhood of the stationary point rather than exactly to zero, due to the persistent reconstruction error and inter-client correlation. The steady-state neighborhood size is determined by the last term in the bound, which depends on $(1-\delta^2)\rho^2$ and $\tau$. If $\tau \ll (1-\delta^2)\rho^2$, i.e., client errors are weakly correlated, the influence of $\tau$ diminishes as $N$ increases, leading to a smaller error floor and near-FedAvg convergence. Conversely, if $\tau$ is large (approaching $(1-\delta^2)\rho^2$), the inter-client correlation amplifies the error accumulation, slowing convergence and enlarging the final neighborhood. 

\begin{corollary}
	\label{cor:convergence}
	Under the same conditions as Theorem~\ref{thm:convergence}, assume that $T \geq N$. 
	If we choose $\eta = \frac{\sqrt{N}}{L\sqrt{T}}$ and $I \leq \frac{T^{1/4}}{N^{3/4}}$, then we have 
	\begin{align*}
		&\frac{1}{T} \sum_{t=1}^{T} \mathbb{E}[\|\nabla f(\overline{x}^{t-1})\|^2] \leq \frac{4L}{\sqrt{N T}} (f(\overline{x}^{0}) - f^*)+ \frac{4 \sigma^2}{L\sqrt{NT}} \nonumber \\
		& + \frac{16\rho^2}{\sqrt{NT}}  + \left(\frac{4}{N}+\frac{4}{\sqrt{NT}}\right)\bigg((1 -\delta^2)\rho^2 + (N-1) \tau\bigg).
	\end{align*}
\end{corollary}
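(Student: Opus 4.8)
The plan is to derive Corollary~\ref{cor:convergence} directly from the bound of Theorem~\ref{thm:convergence} by substituting $\eta = \frac{\sqrt{N}}{L\sqrt{T}}$ and using the hypothesis $I \le \frac{T^{1/4}}{N^{3/4}}$, then simplifying each of the four terms on the right-hand side separately. No new analytic estimate is required; once Theorem~\ref{thm:convergence} is in hand, the argument is a bookkeeping exercise in the exponents of $N$ and $T$.

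First I would record two elementary consequences of the assumptions that keep the final expression clean: $T \ge N$ gives $L\eta = \sqrt{N/T} \le 1$, so $\eta$ lies in the regime where Theorem~\ref{thm:convergence} is informative, and it also gives $\frac{1}{\sqrt{NT}} \le \frac{1}{N}$. Then I would process the terms one at a time. For the optimization-gap term, $\frac{4}{\eta T}\,(f(\overline{x}^{0}) - f^*) = \frac{4L\sqrt{T}}{\sqrt{N}\,T}\,(f(\overline{x}^{0}) - f^*) = \frac{4L}{\sqrt{NT}}\,(f(\overline{x}^{0}) - f^*)$. For the stochastic-variance term, $\frac{4\eta\sigma^2}{N} = \frac{4\sigma^2}{L\sqrt{NT}}$. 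For the compression-error term, $L\eta = \sqrt{N/T}$ yields $\frac{4(1+L\eta)}{N} = \frac{4}{N} + \frac{4}{\sqrt{NT}}$, which then multiplies the bracket $((1-\delta^2)\rho^2 + (N-1)\tau)$ unchanged, reproducing the last summand of the corollary.

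The only term that needs the bound on $I$ rather than a plain substitution is the local-drift term $16 L^2 \eta^2 I^2 \rho^2$: here $\eta^2 = \frac{N}{L^2 T}$ reduces it to $\frac{16 N I^2 \rho^2}{T}$, and squaring the hypothesis on $I$ gives $I^2 \le \frac{\sqrt{T}}{N^{3/2}}$, so $\frac{16 N I^2 \rho^2}{T} \le \frac{16\rho^2}{\sqrt{NT}}$. Summing the four simplified terms gives exactly the claimed inequality. The main "obstacle" is therefore purely clerical: one must carry the powers of $\eta$ and $I$ through consistently, and use the monotone replacement $I \le T^{1/4}/N^{3/4}$ only in the single term where enlarging $I$ can only loosen the bound. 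There is no analytic difficulty beyond Theorem~\ref{thm:convergence} itself; the corollary is essentially the $\eta = \Theta(\sqrt{N/T})$, $I = \widetilde{O}(T^{1/4}/N^{3/4})$ specialization that exposes the $O(1/\sqrt{NT})$ rate plus the irreducible $\frac{4}{N}((1-\delta^2)\rho^2 + (N-1)\tau)$ floor caused by compression.
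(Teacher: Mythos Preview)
Your proposal is correct and matches the paper's approach: the corollary is obtained from Theorem~\ref{thm:convergence} by direct substitution of $\eta=\sqrt{N}/(L\sqrt{T})$ and $I\le T^{1/4}/N^{3/4}$, and the paper in fact treats it as an immediate consequence without spelling out a separate proof. Your term-by-term bookkeeping reproduces exactly the four summands in the stated bound.
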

% The corollary further reveals that when $\tau$ is small and $N$ is large, the last term diminishes proportionally to $\frac{1}{N}$, and GradESTC achieves a convergence rate close to that of FedAvg (e.g., $O(\frac{1}{\sqrt{NT}})$). However, when $\tau$ approaches $\rho^2$, the residual correlation term becomes dominant, leading to a non-negligible steady-state error.
\paragraph{Discussion} Theorem~\ref{thm:convergence} and Corollary~\ref{cor:convergence} indicates that GradESTC converges to a neighborhood of a stationary point rather than exactly to zero. The size of this steady-state neighborhood is determined by the last term in the bound, which depends on \((1-\delta^2)\rho^2\) and \(\tau\).

When \(\tau \ll (1-\delta^2)\rho^2\), the inter-client errors are nearly independent. In this case, the term \(\frac{4}{N}(1-\delta^2)\rho^2\) diminishes as \(N\) increases, while the term \(\frac{4}{\sqrt{NT}}(1-\delta^2)\rho^2\) diminishes with \(T\). Consequently, the convergence neighborhood shrinks as the number of clients grows, and GradESTC achieves a convergence rate close to the standard FedAvg rate, \(O(\frac{1}{\sqrt{NT}})\). However, if \(\tau\) is comparable to \((1-\delta^2)\rho^2\), the residuals across clients are strongly correlated. In this scenario, the sum \((1-\delta^2)\rho^2 + (N-1)\tau\) scales approximately linearly with \(N\). As a result, increasing the number of clients no longer reduces the steady-state error, and the convergence rate degrades significantly.

Intuitively, $\tau$ reflects the degree of similarity among clients' compression errors. Under relatively IID data distributions, local gradients tend to align in similar directions, often leading to more correlated reconstruction errors and thus a larger $\tau$. Conversely, in highly non-IID scenarios, local gradients usually diverge across clients, resulting in less correlated or nearly orthogonal reconstruction errors and a smaller $\tau$.

% Intuitively, \(\tau\) measures how similar the compression errors are across clients. In IID settings, local gradients tend to align in similar directions, resulting in correlated reconstruction errors and thus a larger \(\tau\). In contrast, under non-IID settings, local gradients differ significantly across clients, leading to less correlated (or nearly orthogonal) compression errors and a smaller \(\tau\).

\section{Experiments} \label{sec:experiments}  
This section evaluates the effectiveness of each component of GradESTC and assesses the overall performance of the algorithm in terms of communication overhead, model accuracy, and convergence speed. Experiments are conducted under both IID and non-IID settings, which are common in FL, to demonstrate the generalizability of GradESTC. All experiments are implemented in the PyTorch framework and run on a GeForce RTX 3090 Ti GPU. GradESTC\footnote{Code: \url{https://github.com/zsl503/GradESTC}} is built upon FedAvg, a widely adopted FL algorithm.

\subsection{Comparison Experiment} \label{sec:comparison}  
% 实验设计表格
\begin{table} [!hb]
    \begin{center}
    \caption{Experimental Design.}
    \label{tab:experiment_design}
    % Define alternating row colors
    % \rowcolors{2}{gray!25}{white}
    \begin{tabular}{llrrr}
        \hline
        Dataset & Model & Param Size & Rounds & BS \\
        \hline
        MNIST~\cite{deng2012mnist}   & LeNet5~\cite{lecun2002gradient}    & 0.26 MB     & 100   & 32 \\
        CIFAR-10~\cite{krizhevsky2009learning} & ResNet18~\cite{he2016deep}   & 42.65 MB   & 100 & 32 \\
        CIFAR-100~\cite{krizhevsky2009learning} & AlexNet~\cite{krizhevsky2017imagenet}   & 217.61 MB    & 100 & 32 \\
        \hline
    \end{tabular}
	\end{center}
\end{table}
To evaluate the effectiveness of our method, we conduct comparison experiments on three widely used image classification datasets: MNIST, CIFAR-10, and CIFAR-100. The MNIST dataset contains 60,000 grayscale images of handwritten digits for training and 10,000 images for testing. The CIFAR-10 comprises 50,000 training images and 10,000 test images spanning 10 classes of natural scenes. The CIFAR-100 increases the classification challenge by extending to 100 classes, each containing 500 training images and 100 test images.

\paragraph{Experiment Setup} 
We employ three representative models with varying levels of complexity: LeNet5, ResNet18, and AlexNet, as summarized in Table~\ref{tab:experiment_design}. For each dataset, experiments are conducted under three data distribution scenarios: IID and non-IID with Dirichlet parameters $\alpha = 0.5$ and $\alpha = 0.1$, where $\alpha$ controls the degree of heterogeneity among clients. The "Rounds" column in Table~\ref{tab:experiment_design} indicates the number of global training rounds (i.e., aggregation steps) corresponding to each setting. All algorithms are trained with a learning rate of 0.01 using cross-entropy loss. We simulate a FL environment with 10 clients, all participating in every global round, and each client performing one local epoch of training on its private data before aggregation.
% In the experiments, GradESTC is compared to baseline methods in terms of communication overhead, model accuracy, and convergence speed. Several approaches are compared, including Top-k, which reduces transmitted bits through sparsification; FedPAQ, which reduces transmitted bits through quantization and periodic updates; and SVDFed, a recent approach based on SVD that leverages gradient correlation across clients and rounds. For the Top-k algorithm, the $k$ value is set to 10\%; for FedPAQ, the quantization level is set to 8 (reducing the parameter size to approximately $ 1/4$ of its original size, as uncompressed parameters are 32-bit). For SVDFed, the hyperparameter $\gamma$ is selected via grid search over the set ${4, 6, 8, 10}$ to optimally balance convergence speed, communication overhead, and model accuracy. The optimal $\gamma$ values for the MNIST dataset under IID, non-IID with $\alpha=0.5$, and non-IID with $\alpha=0.1$ settings are 4, 4, and 8, respectively. For CIFAR-10, $\gamma=6$ is optimal across all three data distributions. For CIFAR-100, $\gamma=2$ consistently yields the best performance under all settings.

% IID 表格
\begin{table*}[htbp]
	% \small
\begin{center}
	\caption{Comparison Results in Different Datasets and Distributions.}
	\label{tab:All_results}
	% Define alternating row colors starting from the second row
	% \rowcolors{3}{gray!25}{white} % Start from the third row because of multirow cells
	\begin{tabular}{
		@{}
		l
		l % Method column
		*{3}{r} % MNIST-Lenet5 columns
		*{3}{r} % Cifar10-Resnet18 columns
		*{3}{r} @{} % Cifar100-Alexnet columns
	}
		\hline
		\multirow{2}{*}{\makecell[c]{Distribution}} & \multirow{2}{*}{\makecell[c]{Method}} & \multicolumn{3}{c}{MNIST-LeNet5} & \multicolumn{3}{c}{CIFAR10-ResNet18} & \multicolumn{3}{c}{CIFAR100-AlexNet} \\ 
		\cline{3-11} & 
		& \makecell[c]{Uplink at \\ Threshold} & \makecell[c]{Total \\ Uplink} & \makecell[c]{Best \\ Accuracy} 
		& \makecell[c]{Uplink at \\ Threshold} & \makecell[c]{Total \\ Uplink} & \makecell[c]{Best \\ Accuracy} 
		& \makecell[c]{Uplink at \\ Threshold} & \makecell[c]{Total \\ Uplink} & \makecell[c]{Best \\ Accuracy}  \\
		\hline
		% \midrule/s
		\multirow{5}{*}{\makecell[c]{IID}}
			& \multirow{1}{*}{FedAvg} 
				& 0.0299  & 0.1658 & *{98.98}  \quad
				& 10.8396  & 41.6908& \underline{83.11}  \quad 
				& 57.7672 & 213.9527 & 55.96  \quad   \\ 
			& \multirow{1}{*}{Top-k} 
				& - & 0.0500 & 97.13   \quad
				& 5.5037 & 12.5084 & 82.57   \quad
				& 32.0891 & 63.5361 & 56.37  \quad   \\
			& \multirow{1}{*}{FedPAQ} 
				& 0.0315  & 0.0417  & \underline{99.01}   \quad
				& 2.8215  & 10.4501 & 83.05  \quad 
				& 15.5268 & 53.5406 & 55.88  \quad  \\
			& \multirow{1}{*}{SVDFed} 
				& *{0.0111}  & \underline{0.0117}  & 98.27 \quad  
				& *{2.2682}  & \underline{4.6891}  & 82.66 \quad 
				& *{10.7562} & \underline{15.5282} & 56.46 \quad \\ 	
			& \multirow{1}{*}{FedQClip} 
				& 0.0113  & 0.0417  & 98.89  \quad   
				& 3.5530  & 10.4501 & *{83.09}  \quad 
				& 14.9914 & 53.5406 & *{57.57}  \quad\\ 	 	
			& \multirow{1}{*}{GradESTC}
				& \underline{0.0058} & *{0.0196}  & 98.75   \quad
				& \underline{1.4419} & *{5.2143}  & {83.04}   \quad
				& \underline{7.3583} & *{20.4534} & \underline{58.89}   \quad \\
		\hline
		\multirow{5}{*}{\makecell[c]{Non-IID \\ $\alpha=0.5$}}
			& \multirow{1}{*}{FedAvg} 
				& 0.0249  & 0.1658   & *{98.72}    \quad
				& 15.0087 & 41.6908  & 78.18   \quad
				& 77.0230 & 213.9527 & 54.43   \quad \\
			& \multirow{1}{*}{Top-k} 
				& - & 0.0500  & 96.54   \quad
				& 13.8480 & 17.9844 & 78.11   \quad
				& 32.7309 & 64.1778 & 54.75   \quad \\
			& \multirow{1}{*}{FedPAQ} 
				& 0.0063  & 0.0417  & \underline{98.74}   \quad
				& 5.2250  & 10.4501 & 78.57   \quad
				& 17.6684 & 53.5406 & *{54.96}  \quad   \\
			& \multirow{1}{*}{SVDFed} 
				& *{0.0061}  & \underline{0.0102}  & 98.22   \quad
				& 3.6799  & \underline{4.7770}  & 78.35   \quad
				& *{13.4150} & \underline{15.5282} & 54.62   \quad \\ 	 
			& \multirow{1}{*}{FedQClip} 
				& 0.0104  & 0.0417  & 98.46   \quad
				& *{3.2395}  & 10.4501  & *{78.89}   \quad
				& 24.0933 & 53.5406 & 54.58   \quad \\ 		
			& \multirow{1}{*}{GradESTC} 
				& \underline{0.0036} & *{0.0175}  & {98.57}   \quad
				& \underline{1.6765} & *{5.0959}  & \underline{79.28}   \quad
				& \underline{8.5952} & *{20.5002} & \underline{57.06}   \quad \\
		\hline
		\multirow{5}{*}{\makecell[c]{ Non-IID \\ $\alpha=0.1$}}
			& \multirow{1}{*}{FedAvg}
				& 0.0547  & 0.1658   & *{97.79}   \quad
				& 8.3382  & 41.6908  & 64.91   \quad
				& 66.3253 & 213.9527 & 52.13   \quad \\
			& \multirow{1}{*}{Top-k} 
				& - & 0.0500  & 94.40   \quad
				& 8.4527  & 17.9844 & 64.25   \quad
				& 32.7308 & 64.1778 & 50.51   \quad \\
			& \multirow{1}{*}{FedPAQ}
				& 0.0133  & 0.0417   & \underline{97.80}   \quad
				& 2.0900  & 10.4501  & 64.72   \quad
				& 14.9914 & 53.5406  & 52.03   \quad \\
			& \multirow{1}{*}{SVDFed} 
				& *{0.0081}  & \underline{0.0102}  & 96.20 \quad
				& *{1.4534}  & \underline{4.7770}  & 65.51 \quad
				& *{12.9593} & \underline{14.1220} & 49.81 \quad\\
			& \multirow{1}{*}{FedQClip}
				& 0.0108  & 0.0417  & 98.18   \quad
				& 2.8215  & 10.4501  & \underline{66.72}   \quad
				& 19.8100 & 53.5406  & *{52.34}  \quad \\ 	
			& \multirow{1}{*}{GradESTC} 
				& \underline{0.0036}  & *{0.0170}  & {97.50}   \quad
				& \underline{1.1586}  & *{5.0370}  & *{66.00}   \quad
				& \underline{8.4027}  & *{20.4798} & \underline{53.49}   \quad \\
		\hline
	\end{tabular}		
\end{center}
\end{table*}

% bar 图
\begin{figure*}[htbp]
	\centering
	\begin{minipage}{1\linewidth}
		\centering
		\includegraphics[width=0.9\linewidth]{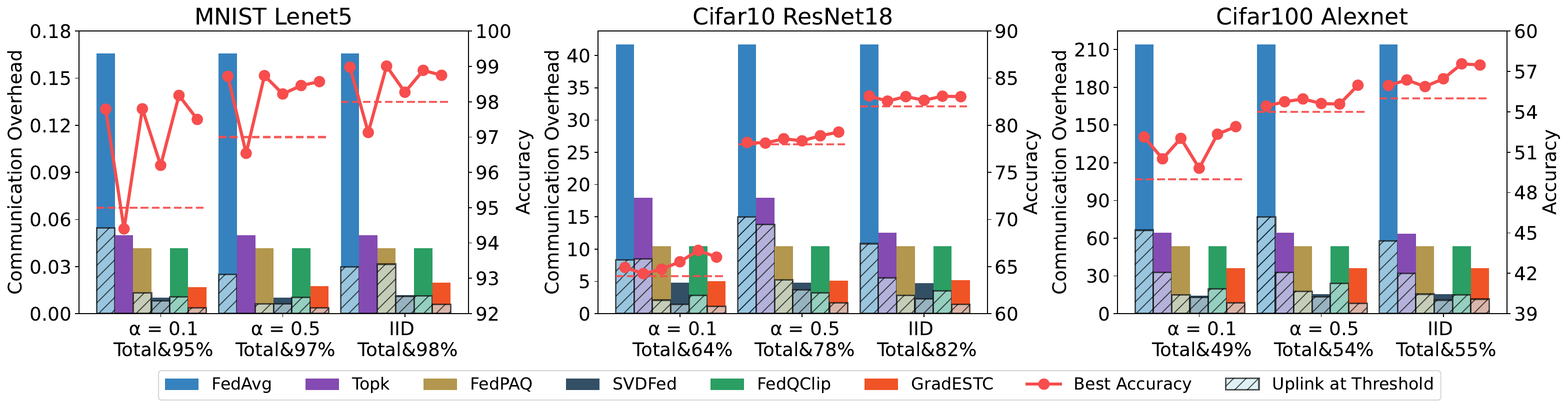}
	\end{minipage}
	\caption{Experimental Results of Different Algorithms. Solid bars indicate total uplink overhead, diagonal bars show overhead to reach $x$\% accuracy, red dots mark the peak accuracy achieved, and the red horizontal dotted line signifies the specified convergence accuracy threshold.}
	% The solid column represents the total uplink overhead of different algorithms, the diagonal column represents the overhead of reaching x\% accuracy, the red dot represents the highest accuracy achieved by the algorithm, and the red horizontal dotted line is the given accuracy close to convergence.
	\label{fig:bar_results}
\end{figure*}

In the experiments, GradESTC is compared to baseline methods in terms of communication overhead, model accuracy, and convergence speed. Several approaches are evaluated, including Top-k~\cite{NEURIPS2018_b440509a}, which reduces transmitted bits through sparsification; FedPAQ~\cite{pmlr-v108-reisizadeh20a}, which reduces transmitted bits via quantization and periodic updates; SVDFed~\cite{wang2023svdfed}, a recent approach based on SVD that exploits gradient correlation across clients and rounds; and FedQClip~\cite{qu2024fedqclip}, which adopts gradient clipping and quantization to control local update magnitude and reduce communication overhead. For the Top-k algorithm, the $k$ value is set to 10, except for the CIFAR-10 non-IID setting where $k=20$ is used to ensure convergence. For FedPAQ, the quantization level is fixed at 8 (reducing the parameter size to approximately $1/4$ of its original 32-bit representation). For SVDFed, the hyperparameter $\gamma$ is selected based on empirical performance: $\gamma=8$ is used for both the MNIST and CIFAR-10 datasets under all data distributions, while $\gamma=6$ is used for CIFAR-100 across all settings. For FedQClip, $\eta_c = \eta_s = 0.01$ across all datasets. The clipped coefficient $(\gamma_c, \gamma_s)$ are set to (100, 10,000) for MNIST, (100, 300,000) for CIFAR-10, and (150, 15,000) for CIFAR-100.

% acc vs round
\begin{figure*}[!t]
	\centering
	\begin{minipage}{1.0\linewidth}
		\centering
		\includegraphics[width=0.25\linewidth]{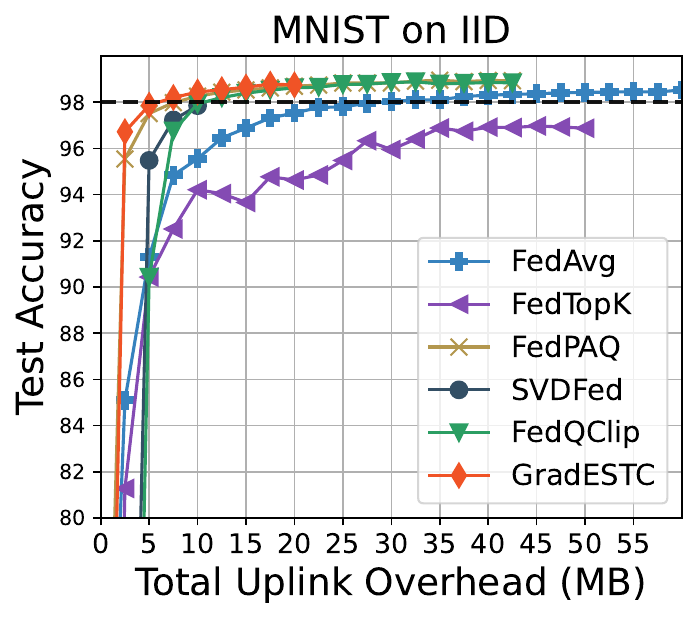}
		\includegraphics[width=0.25\linewidth]{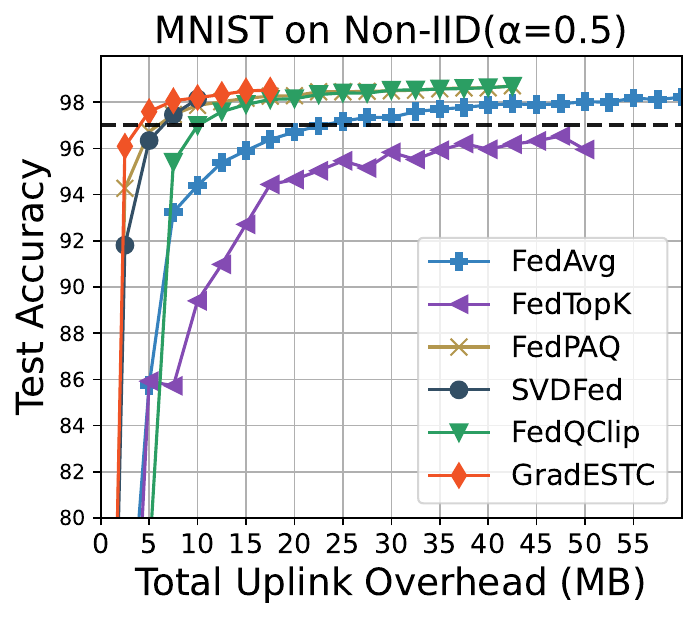}
		\includegraphics[width=0.25\linewidth]{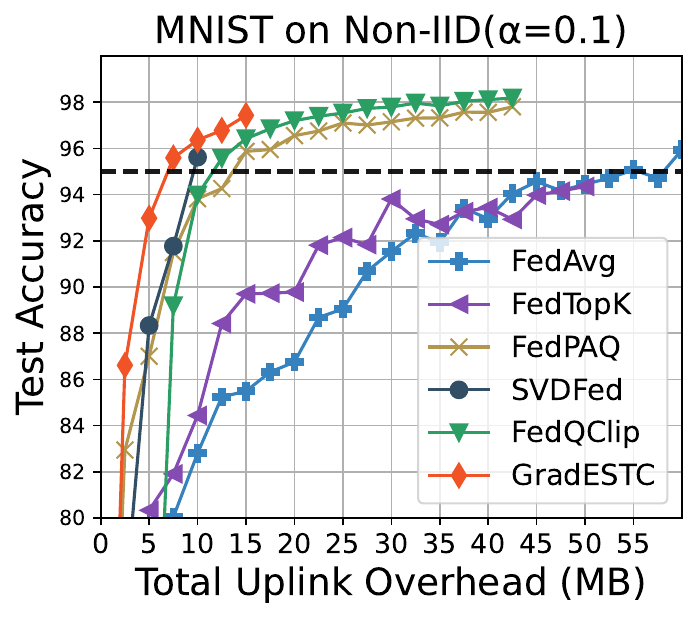}
	\end{minipage}
	\begin{minipage}{1.0\linewidth}
		\centering
		\includegraphics[width=0.25\linewidth]{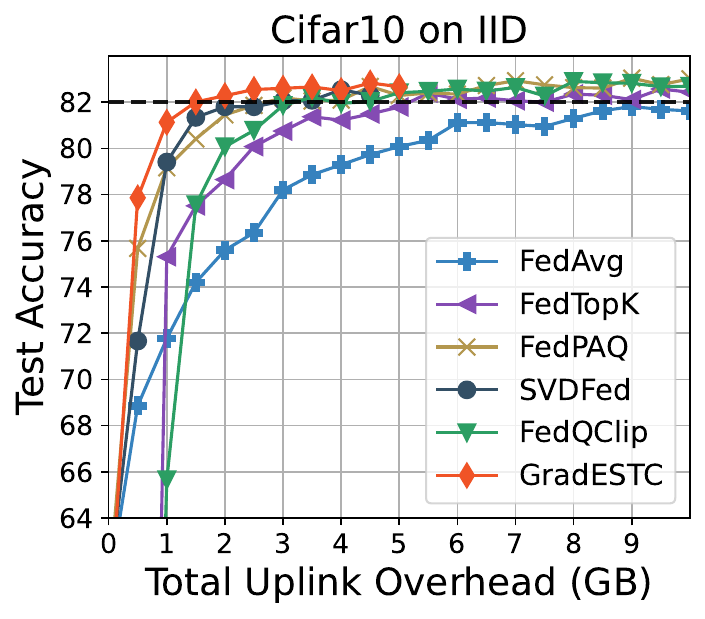}
		\includegraphics[width=0.25\linewidth]{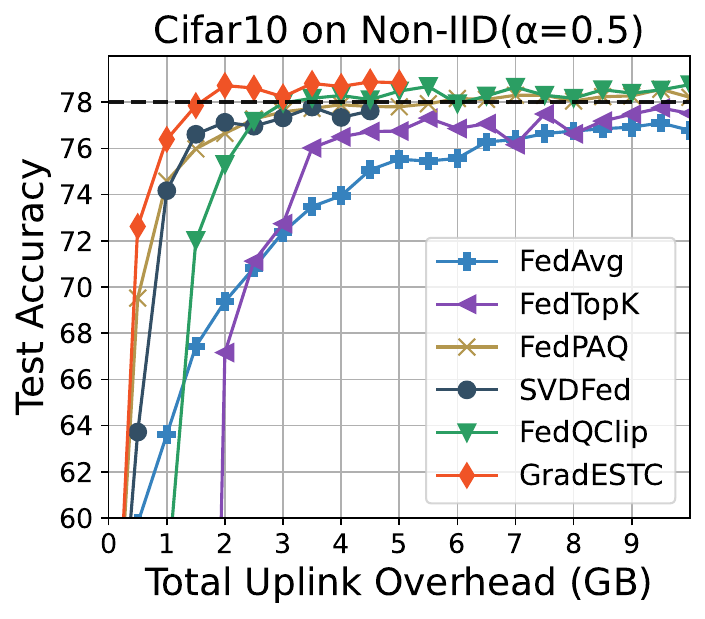}
		\includegraphics[width=0.25\linewidth]{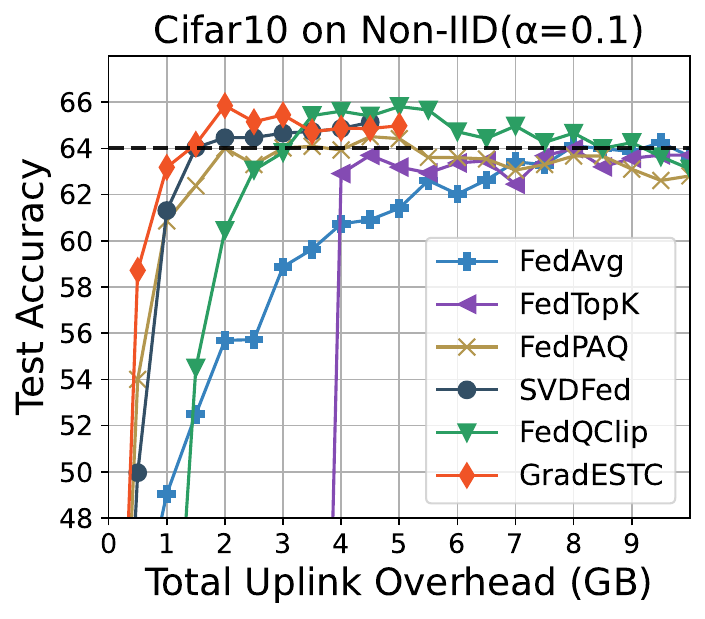}
	\end{minipage}
	\begin{minipage}{1.0\linewidth}
		\centering
		\includegraphics[width=0.25\linewidth]{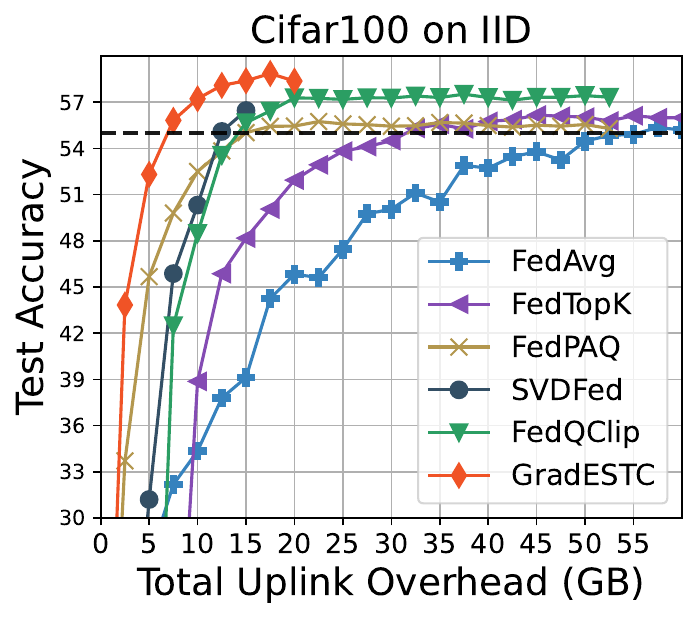}
		\includegraphics[width=0.25\linewidth]{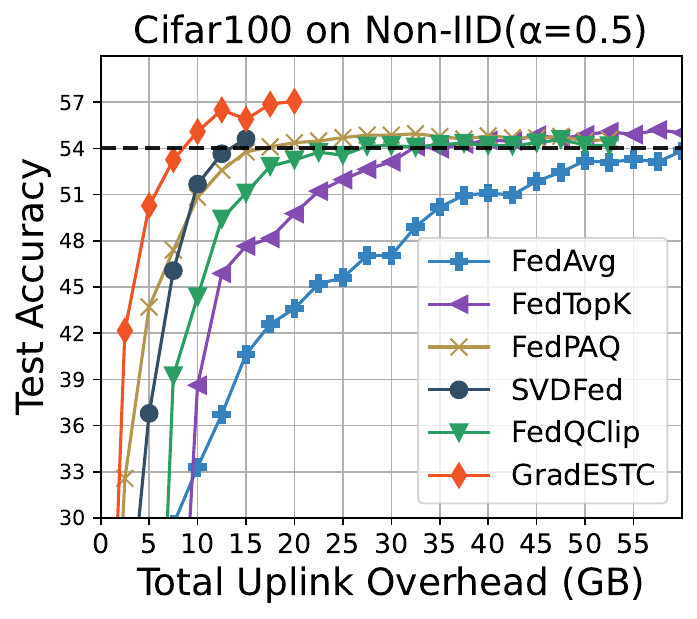}
		\includegraphics[width=0.25\linewidth]{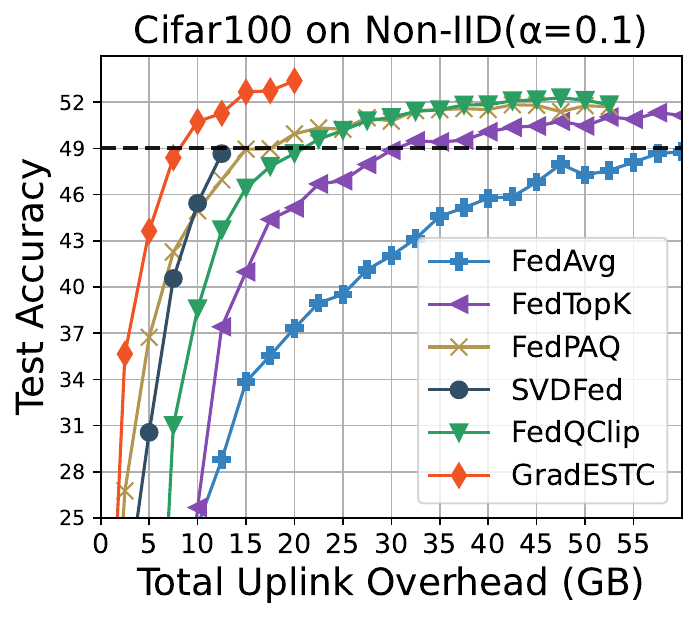}
	\end{minipage}
	\caption{Test Accuracy vs. Overhead. GradESTC achieves the highest accuracy with minimal communication overhead.}
	\label{fig:acc_plot_results}
\end{figure*}

\paragraph{GradESTC Settings}
Based on the observations in Section~\ref{sec:introduction} (Figures~\ref{fig:client_similarity} and~\ref{fig:layer_params}), GradESTC is applied for gradient compression only on layers that account for the majority of model parameters. In the LeNet5 model, the weights of the 'conv2', 'fc1', 'fc2', and 'classifier' layers are compressed, accounting for 99.0\% of the total model parameters. The gradient compression parameters $(k, l)$ are set to $(8, 160)$, $(16, 256)$, $(8, 120)$, and $(4, 28)$, respectively. 
In the ResNet18 model, the default PyTorch implementation is used. The weights of all 'conv1' and 'conv2' layers in stage 'layer3.0', 'layer3.1', 'layer4.0' and 'layer4.1' are compressed, accounting for 92.3\% of the total model parameters. The parameters are set to a fixed $k$ value of 32, and $l$ values of $1152, 2304, 768, 1536, 1024, 1536, 1536, 1536$, respectively.
In the AlexNet model, the weights of the 'conv3', 'conv4', 'conv5', 'fc1', and 'fc2' layers are compressed, accounting for 98.7\% of the total model parameters. The gradient compression parameters are set with a fixed $k=48$, and $l$ values of $288, 288, 256, 512, 1024$, respectively.
For each selected layer, only the weights with the largest proportion of parameters are compressed, while biases, batch normalization parameters, etc., are not compressed. 
% The values of $(k, l)$ are empirically determined, with $l$ typically set as an integer multiple of the layer's dimensions and around the square root of the total parameter size.
% For the non-IID
           
Figure~\ref{fig:acc_plot_results} illustrates the relationship between accuracy and communication overhead across different datasets and distributions. The accuracy unit is percentage (\%), and the uplink overhead unit is GB. GradESTC consistently achieves the highest accuracy under the same communication budget. Figure~\ref{fig:round_plot_results} shows how accuracy evolves over communication rounds across different datasets. While GradESTC may exhibit slightly slower convergence during the mid-training phase, its final accuracy consistently matches or even surpasses that of uncompressed FedAvg.

% acc vs comm
\begin{figure*}[!t]
	\centering
	\begin{minipage}{1.0\linewidth}
		\centering
		\includegraphics[width=0.25\linewidth]{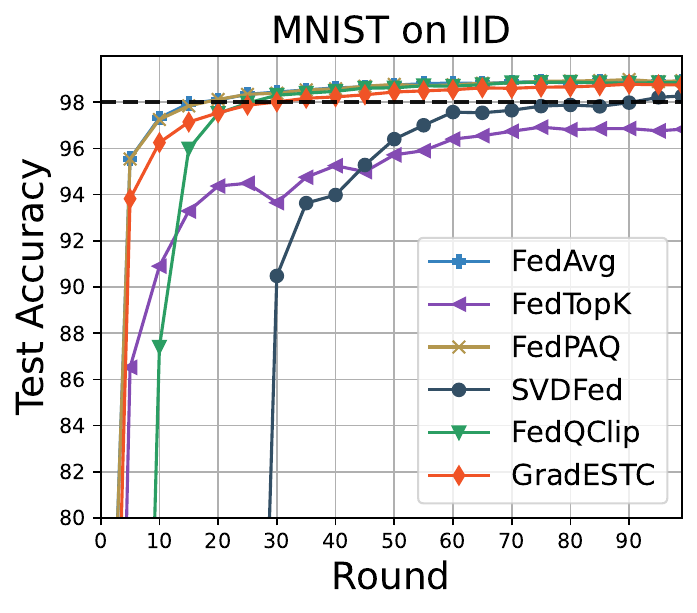}
		\includegraphics[width=0.25\linewidth]{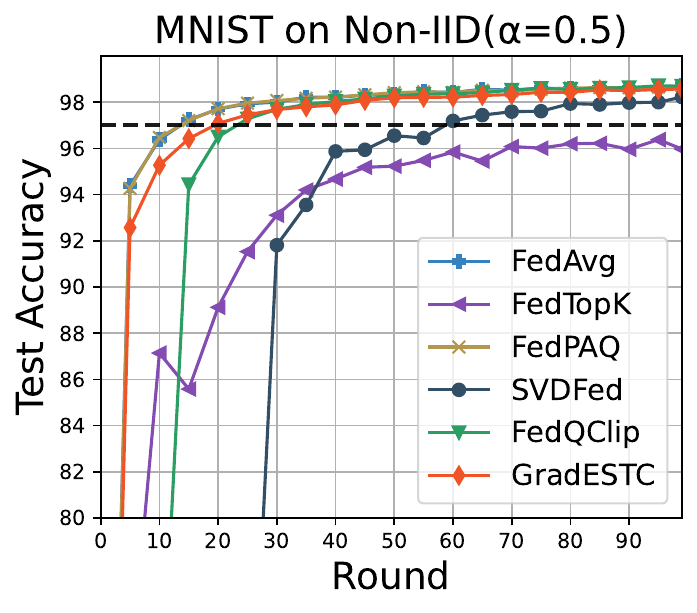}
		\includegraphics[width=0.25\linewidth]{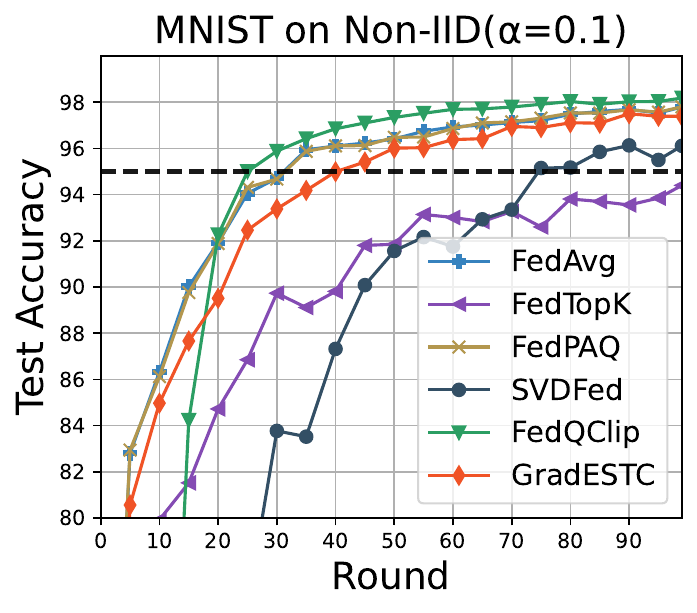}
	\end{minipage}
	\begin{minipage}{1.0\linewidth}
		\centering
		\includegraphics[width=0.25\linewidth]{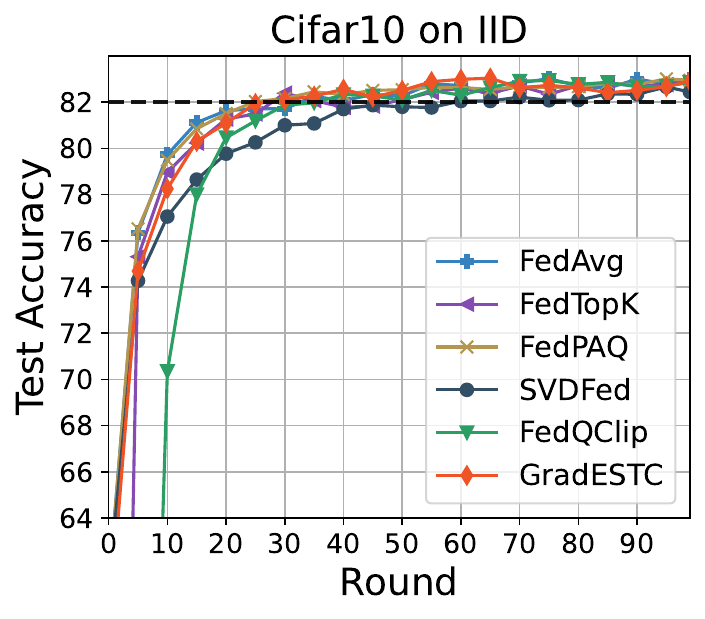}
		\includegraphics[width=0.25\linewidth]{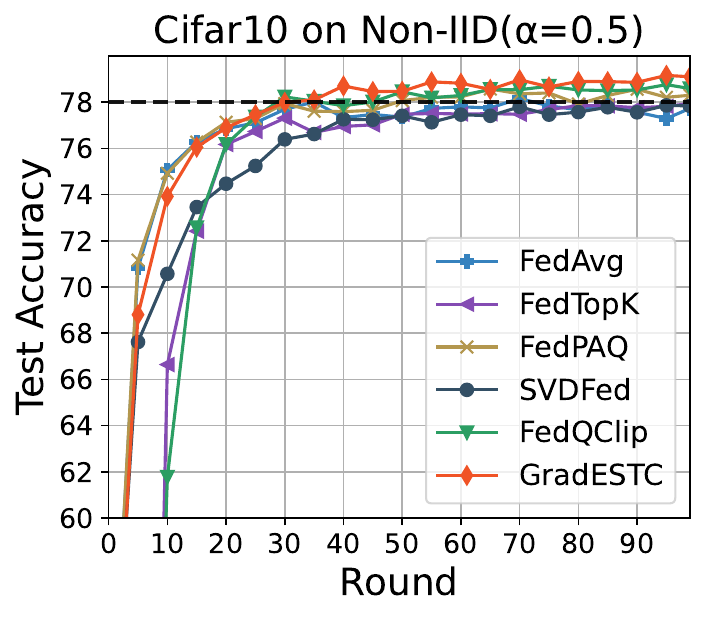}
		\includegraphics[width=0.25\linewidth]{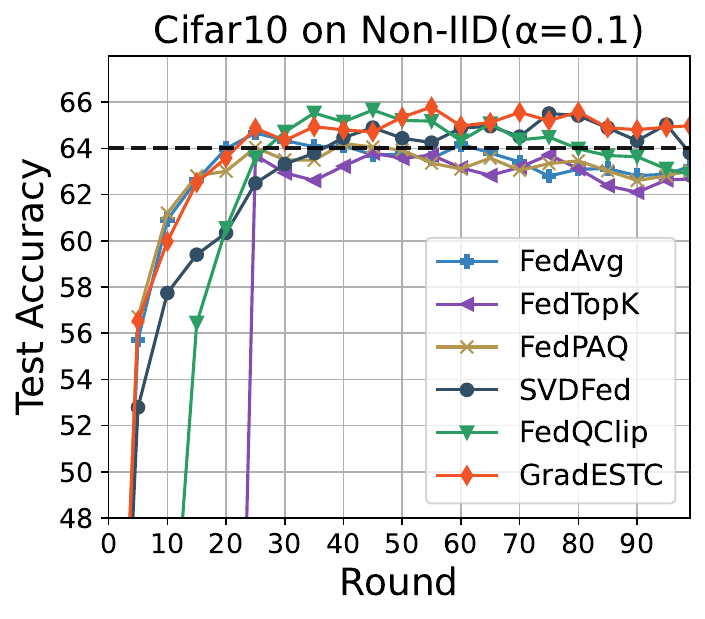}
	\end{minipage}
	\begin{minipage}{1.0\linewidth}
		\centering
		\includegraphics[width=0.25\linewidth]{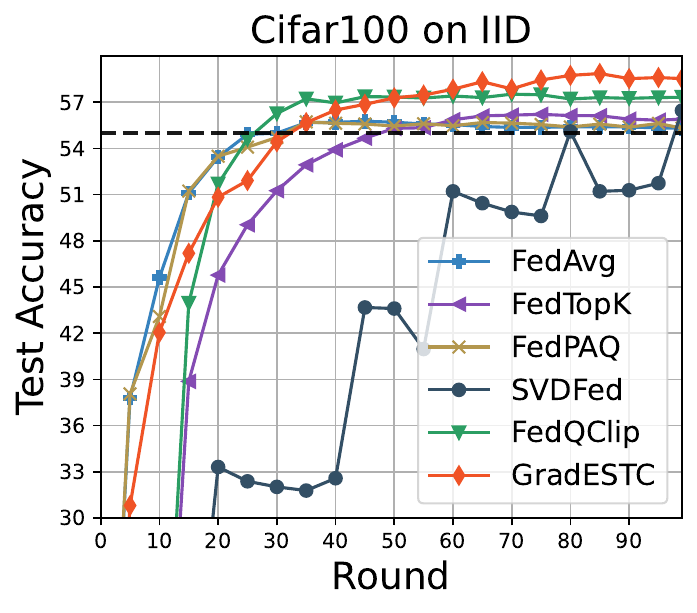}
		\includegraphics[width=0.25\linewidth]{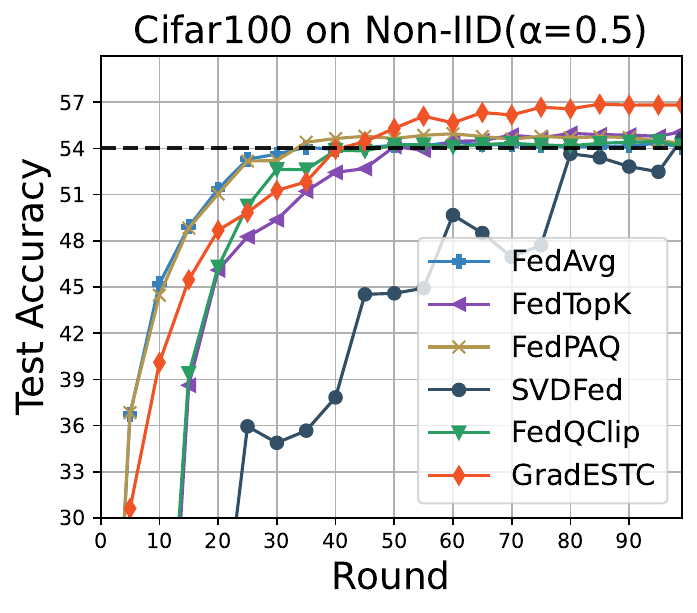}
		\includegraphics[width=0.25\linewidth]{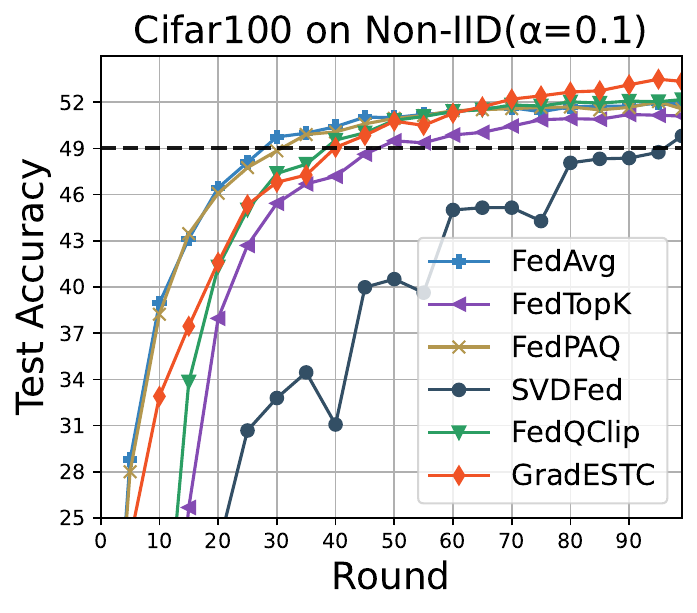}
	\end{minipage}
	\caption{Test Accuracy vs. Round. GradESTC achieves a convergence speed comparable to or even better than FedAvg.}
	\label{fig:round_plot_results}
\end{figure*}

\subsection{Experiment on large-scale client participation} \label{sec:large_client_participation}
To evaluate the performance of GradESTC in scenarios with large-scale client participation, we conducted experiments on the CIFAR-10 dataset using ResNet18. The number of clients was set to 50, and selected 20\% of clients to participate in each round. The results shown in Figure~\ref{fig:large_client_participation}, demonstrate GradESTC's scalability and effectiveness in large-scale FL scenarios.

\subsection{Impact of Local Epochs} \label{sec:local_epochs}
To analyze the impact of local epochs on model performance, we conducted experiments with different local epoch settings. The experiments were conducted on the CIFAR-10 dataset using the ResNet18 model, with local epochs set to 3, 5, and 7, while keeping other settings consistent with previous experiments. As shown in Figure~\ref{fig:local_epochs}, increasing the number of local epochs allows the basis vectors to more accurately capture the temporal correlation of gradients across local rounds within each client, thereby enabling more effective gradient compression.
% 在local epochs=7时，我们发现GradESTC的精度显著提高，我们推测是因为本地训练足够轮次后，基向量学习到了客户端的核心梯度信息，从而更有效地参与了聚合。（先不加）
% 英文：When the local epochs are set to 7, we observe a significant improvement in accuracy for GradESTC. We speculate that this is because with sufficient local training rounds, the basis vectors learn the core gradient information of the client, thus participating more effectively in aggregation.

\begin{figure}[!t]
	\centering
	\begin{minipage}{1.0\linewidth}
		\centering
		\includegraphics[width=0.49\linewidth]{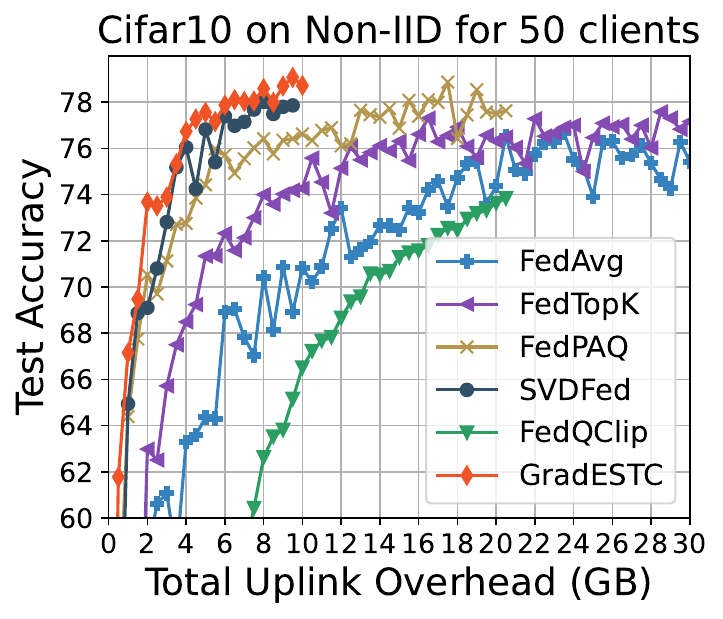}
		\includegraphics[width=0.49\linewidth]{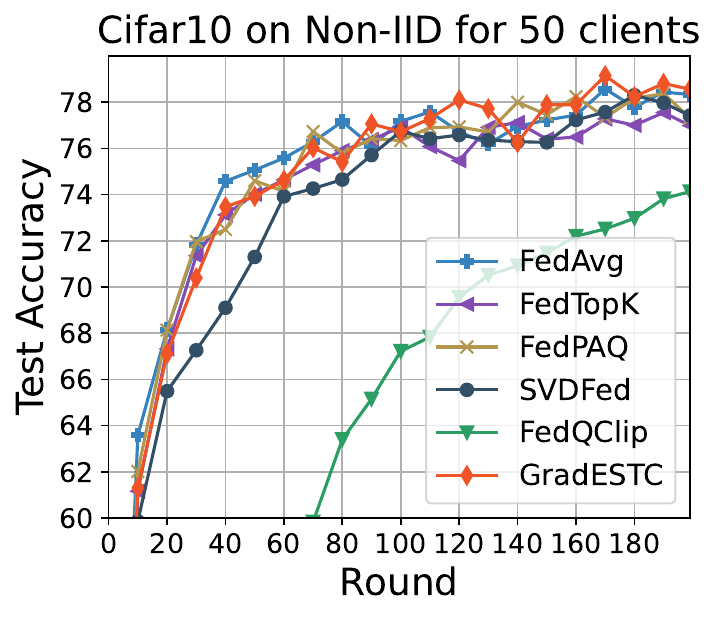}
	\end{minipage}
	\caption{Test Accuracy vs. Overhead and Round with 50 Clients.}
	\label{fig:large_client_participation}
\end{figure}

\begin{figure}[!t]
	\centering
	\begin{minipage}{1.0\linewidth}
		\centering
		\includegraphics[width=0.49\linewidth]{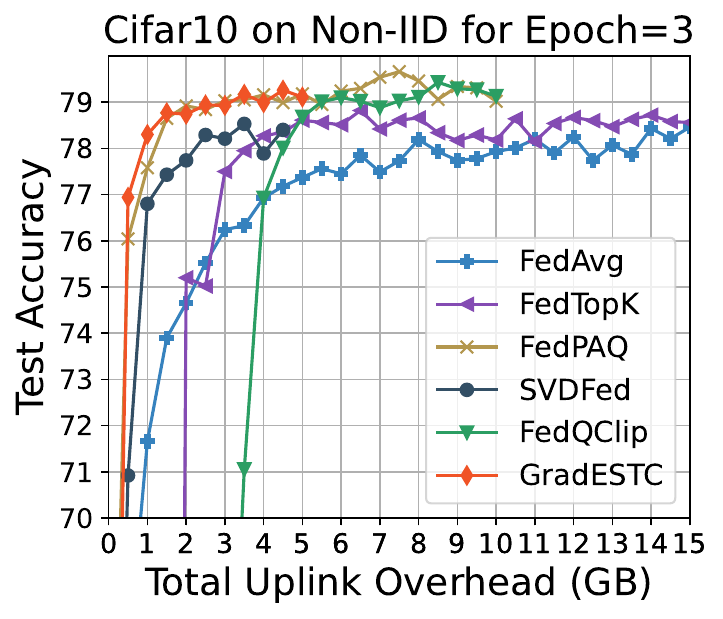}
		\includegraphics[width=0.49\linewidth]{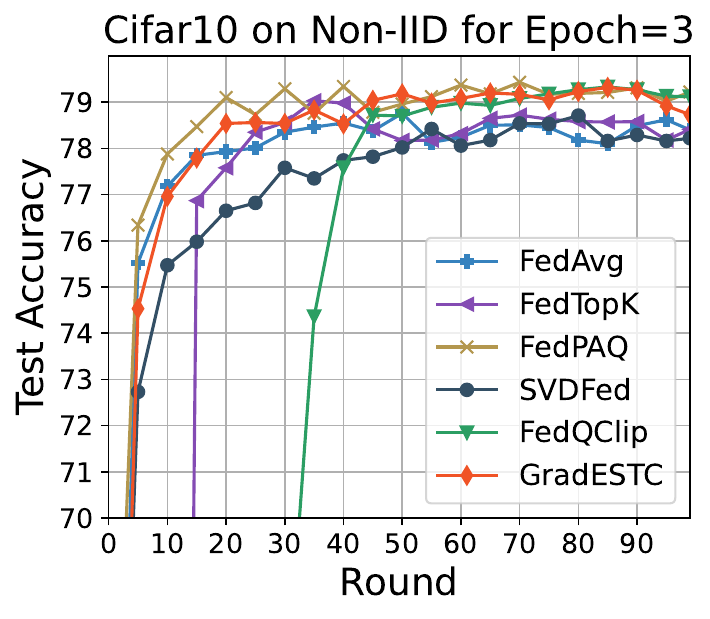}
	\end{minipage}
	\begin{minipage}{1.0\linewidth}
		\centering
		\includegraphics[width=0.49\linewidth]{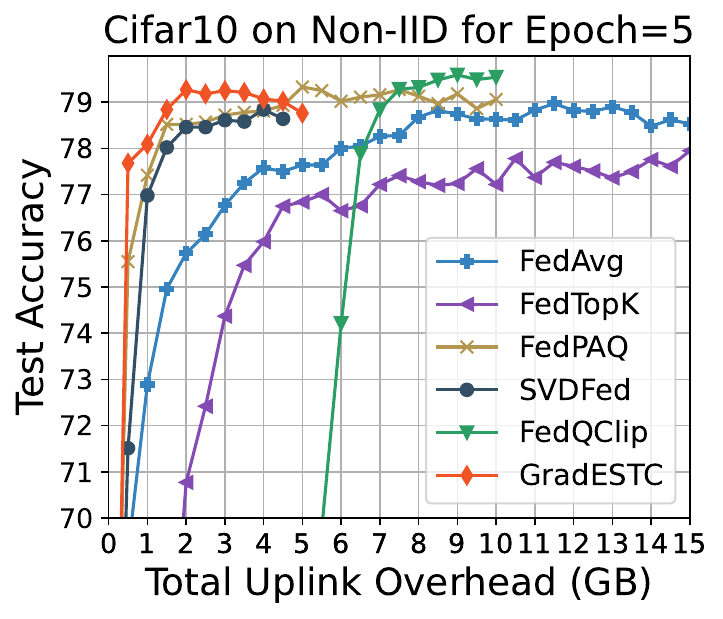}
		\includegraphics[width=0.49\linewidth]{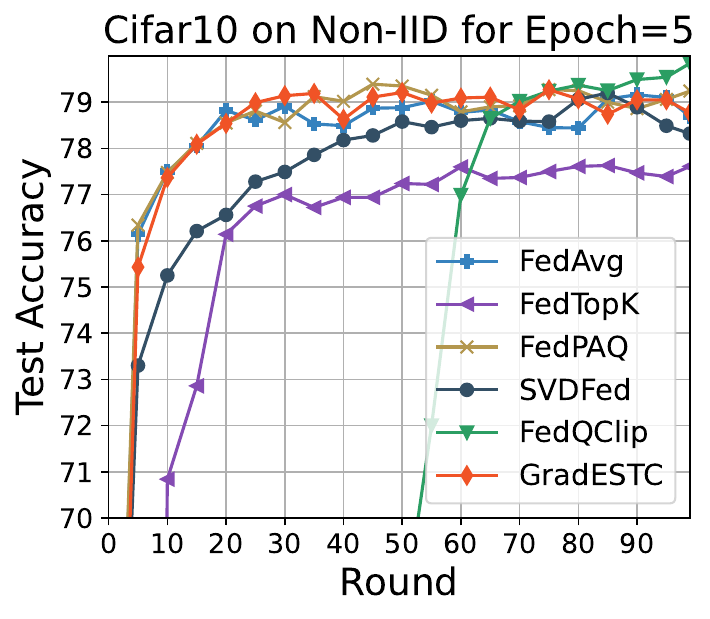}
	\end{minipage}
	\begin{minipage}{1.0\linewidth}
		\centering
		\includegraphics[width=0.49\linewidth]{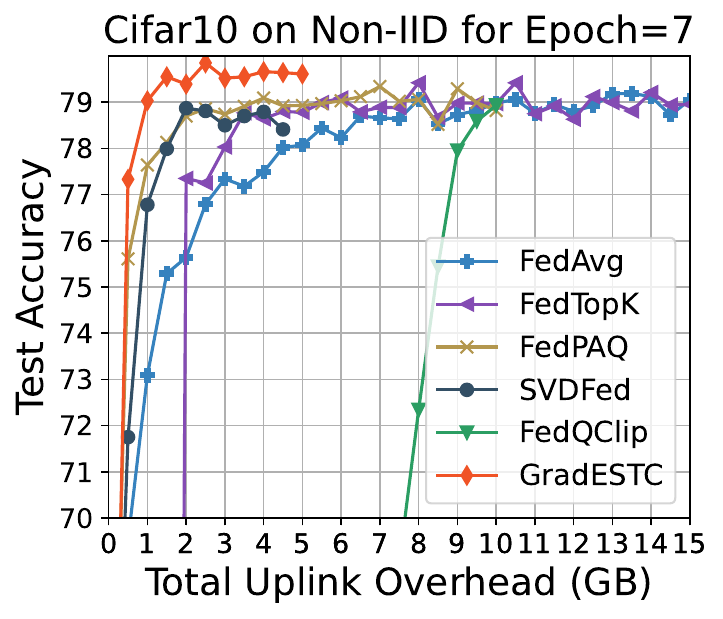}
		\includegraphics[width=0.49 \linewidth]{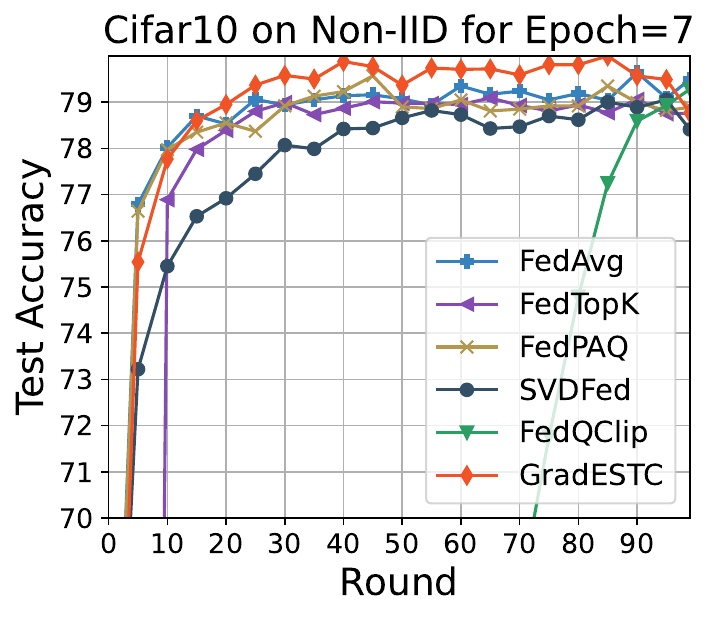}
	\end{minipage}
	\caption{Test Accuracy vs. Overhead and Round with Different Local Epochs.}
	\label{fig:local_epochs}
\end{figure}

\begin{figure}[!t]
	\centering
	\begin{minipage}{1.0\linewidth}
		\centering
		\includegraphics[width=0.49\linewidth]{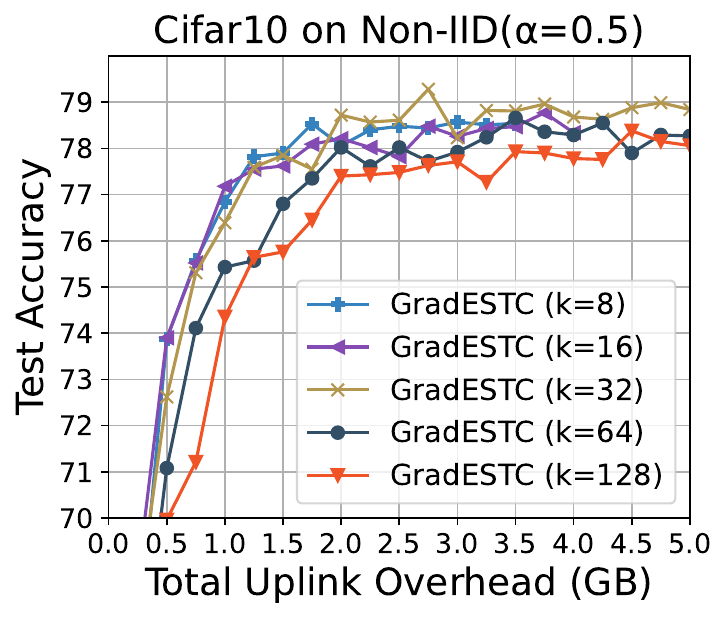}
		\includegraphics[width=0.49\linewidth]{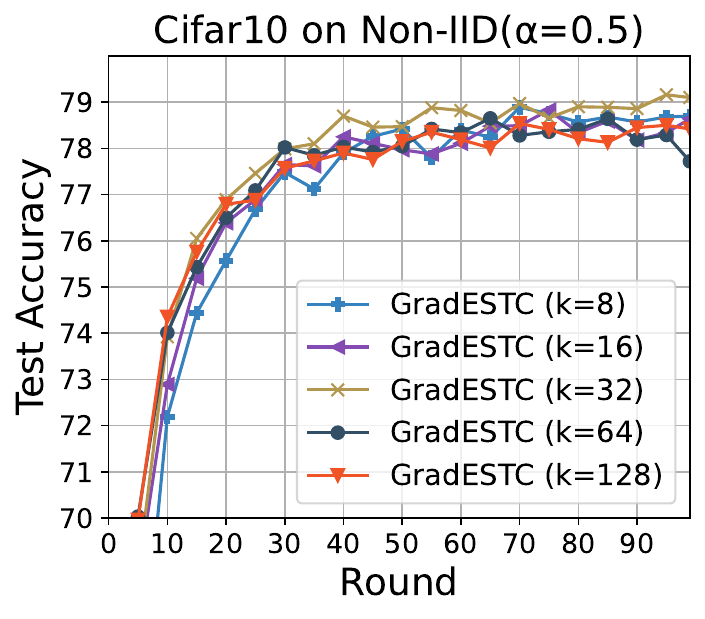}
	\end{minipage}
	\caption{Test Accuracy vs. Overhead and Round with Different $k$ Values.}
	\label{fig:k_sensitivity}
\end{figure}

\subsection{Impact of the $k$ for GradESTC} \label{sec:client_participation}
% 在我们的算法中，基向量的数量 $k$ 影响了算法的运行效率和对梯度的压缩能力。较小的 $k$ 值可以减少通信开销，但可能会降低梯度的表达能力，从而影响模型的收敛速度和最终精度。另一方面，较大的 $k$ 值虽然可以更好地捕捉梯度信息，但会增加通信开销和计算复杂度。因此，选择合适的 $k$ 值是一个重要的设计决策。然而，对于模型的不同层，可以选择不同的 $k$ 值，使得很难对 $k$ 值的影响作出精确的评估。为了验证 $k$ 值对 GradESTC 的影响，我们通过使用Resnet18模型在CIFAR-10数据集上进行了敏感度分析，沿用了之前的实验设置，这是因为对于Resnet18模型来说，所有需要压缩的层都采用同样的 $k$ 值已经被实验证明是有效的。我们调整 $k$ 值为 8, 16, 32, 64, 128，并观察其对模型性能的影响。实验结果如图~\ref{fig:k_sensitivity}所示。
In our algorithm, the number of basis vectors $k$ influences both computational efficiency and gradient compression performance. A smaller $k$ reduces communication overhead but may limit the expressiveness of the gradients, potentially affecting model convergence and final accuracy. Conversely, a larger $k$ better captures gradient information but increases communication and computational overheads. Therefore, selecting an appropriate $k$ is a key design decision. However, since different layers of the model can adopt different $k$ values, it is difficult to precisely evaluate the impact of $k$ in general. To investigate the effect of $k$ in GradESTC, we conducted the sensitivity analysis on the CIFAR-10 dataset using the ResNet18 model, following the same experimental setup as before. This setup is valid because prior experiments have shown that using a uniform $k$ across all compressible layers in ResNet18 is effective. We varied $k$ among 8, 16, 32, 64, and 128, and observed its impact on model performance. The results are shown in Figure~\ref{fig:k_sensitivity}.

% 我们发现 $k$ 值对通信开销和模型性能的影响并不显著。但是过大的 $k$ 值会导致更高的通信开销（如k=128在 Figure~\ref{fig:k_sensitivity} 左图所示，其开销和精度的关系是最糟糕的），过小的 $k$ 值则会影响收敛速度（如k=8在 Figure~\ref{fig:k_sensitivity} 右图所示，初期收敛速度显著变缓）。这是由于过大的 $k$ 值会导致过多的组合系数被传输，增加了通信开销，而过小的 $k$ 值则使得基向量无法充分表达梯度信息。一旦 $k$ 值达到足够数量，通信间更新的基向量个数则由动态调整的 $d$ 值决定，而与 $k$ 的大小不再强相关，因此k=64和128在收敛速度和模型性能上没有显著差异。

We found that the value of $k$ has a limited impact on communication overhead and model performance overall. However, an excessively large $k$ leads to increased communication overhead (e.g., when $k=128$, the trade-off between accuracy and overhead is the worst, as shown in the left plot of Figure~\ref{fig:k_sensitivity}), while a very small $k$ slows down convergence (e.g., when $k=8$, the convergence rate is noticeably slower in the early stages, as shown in the right plot of Figure~\ref{fig:k_sensitivity}). This is because a large $k$ introduces a greater number of combination coefficients to be transmitted, increasing communication overhead, whereas a small $k$ limits the expressiveness of the basis vectors, reducing the quality of gradient representation. Once $k$ reaches a sufficient value, the number of updated basis vectors per round is mainly determined by the dynamically adjusted $d$, making performance less sensitive to further increases in $k$. As a result, $k=64$ and $k=128$ show no significant difference in convergence speed or final model performance.

\subsection{Ablation Experiment} \label{sec:ablation}  
To validate the effectiveness of each component, we conducted a series of ablation experiment. Specifically, three variants were compared with the full GradESTC: GradESTC-first, GradESTC-all, and GradESTC-k. GradESTC-first initializes the basis vectors in the first round only without subsequent updates. GradESTC-all updates all basis vectors every round. GradESTC-k excludes dynamic adjustment and fixes $d$ as $k$. Experiments were conducted on the CIFAR-10 dataset. The sum of $d$ across all rounds and clients reflects computational overhead. As discussed in Section~\ref{sec:time_complexity}, with fixed $k, l, m$, execution efficiency primarily depends on $d$. Smaller sum of $d$ indicates lower computational overhead. In our setup, a single CIFAR-10 client performing full decomposition with $d = 256$ takes approximately 0.20 seconds, whereas a local training round of five epochs takes 8-10 seconds, making the computational overhead negligible.

% 分离实验表格
\begin{table}
	\begin{center}
		\caption{Ablation Experiment Results.}
		\label{tab:ablation}
		\begin{tabular}{
			@{}
			l % Method column
			*{4}{r}
			@{}}
			\hline
			Method & \makecell[c]{Best \\ Accuracy} & \makecell[c]{70\% \\ Uplink} & 
			\makecell[c]{Total \\ Uplink} & \makecell[c]{Sum of \\ $d$ values} \\
			\hline
			\makecell[l]{GradESTC-first} & 71.4600 & 2.9474 & 4.1462 & 2,560 \\
			\makecell[l]{GradESTC-all} & 75.0600 & 1.2420 & 5.4000 & 256,000 \\
			\makecell[l]{GradESTC-k} & 74.0800 & 1.1824 & 4.8447 & 256,000 \\
			GradESTC        & 75.6067 & 0.8892 & 4.8652 & 144,242 \\
			\hline
		\end{tabular}

	\end{center}
\end{table}
Results as shown in Table~\ref{tab:ablation} (where the accuracy unit is percentage and the uplink overhead unit is GB) indicate that the absence of basis vector updates severely impacts accuracy, where GradESTC-first yields the lowest accuracy among all algorithms shown in Table~\ref{tab:All_results}. This is because static basis vectors fail to effectively represent new gradients, thereby degrading training performance and convergence speed. GradESTC-all achieves accuracy near that of FedAvg, with a communication overhead of 5.40GB—representing a 10.99\% increase compared to the full GradESTC. It reduces FedAvg's communication overhead by 7.71×, demonstrating the efficacy of spatial correlation-based compression. However, updating all basis vectors in each round significantly increases the communication overhead compared to selective updating. GradESTC-k incrementally replaces basis vectors without adjusting $k$, reducing the basis vector uplink overhead by 0.55GB compared to GradESTC-all. However, it still incurs significant computational overhead. GradESTC-k and GradESTC-all use fixed $k$ values for SVD, whereas the full GradESTC dynamically adjusts $k$, resulting in a 43.66\% reduction in computational overhead compared to GradESTC-k, thereby demonstrating the effectiveness of the dynamic $d$ adjustment strategy.

\section{Conclusion}
In this paper, we proposed GradESTC, a novel compression framework that exploits spatio-temporal correlations in gradients to minimize communication overhead in FL. Extensive experiments demonstrate that GradESTC achieves the lowest communication overhead among all compared methods while maintaining accuracy and convergence speed comparable to uncompressed FedAvg. The proposed approach exhibits strong adaptability across different datasets, data distributions, and neural network architectures, making it especially effective in bandwidth-constrained environments. As future work, we plan to explore adaptive hyperparameter tuning to enhance the robustness and flexibility of GradESTC, investigate the exploitation of spatio-temporal correlations in the downlink phase to further improve communication efficiency, and incorporate error feedback mechanisms to reduce compression-induced residual errors.

{\appendix[Proof of Theorems]

\subsection{Proof of Theorem~\ref{thm:e4error}}
\textit{Theorem 1:} Under Assumption 4, the expected gradient reconstruction error in GradESTC is bounded as: 
\[
\mathbb{E}[\|e_i^{t}\|^2] = \mathbb{E}[\|G_i^{t,r} - {M_i^{r}}^\top G_i^{t,r}\|^2] \leq \big(1-\delta^2\big)\rho^2,
\]
and the average reconstruction error across all clients satisfies
\[
\mathbb{E}\Big[\|\frac{1}{N}\sum_{i=1}^{N} e_i^{t}\|^2\Big] \leq \frac{1}{N}\big((1 -\delta^2)\rho^2 + (N-1) \tau\big).
\]

\begin{proof}
By Assumption 4, the projection preserves at least a fraction \(\delta\) of the gradient norm. Hence, for each client:
\begin{align}
\mathbb{E}[\|e_i^{t}\|^2] &= \mathbb{E}[\|G_i^{t,r} - M_i^r (M_i^r)^\top G_i^{t,r}\|^2] \nonumber \\
&= \mathbb{E}[\|G_i^{t,r}\|^2 - \|(M_i^r)^\top G_i^{t,r}\|^2] \nonumber \\
&= \mathbb{E}[\|G_i^{t,r}\|^2 (1 - \chi_k^2(G_i^{t,r}, G_i^{*,r-1}) )] \nonumber \\
&\le \big(1-\delta^2\big)\rho^2,
\end{align}
where we used the fact that \(M_i^r\) is an orthonormal basis of the subspace, so \(\|M_i^r (M_i^r)^\top G_i^{t,r}\| = \|(M_i^r)^\top G_i^{t,r}\|\).
% For the average reconstruction error across all clients, we have
For the average reconstruction error across all clients, we expand the squared norm and use the assumption that the covariance between different clients' errors is bounded by \(\tau\):
\begin{align}
\mathbb{E}\Big[\|\frac{1}{N}\sum_{i=1}^{N} e_i^{t}\|^2\Big] &= \frac{1}{N^2} \mathbb{E}\Big[\|\sum_{i=1}^{N} e_i^{t}\|^2\Big] \nonumber \\
&= \frac{1}{N^2} \mathbb{E}\Big[\sum_{i=1}^{N} \|e_i^{t}\|^2 + \sum_{i \neq j} \langle e_i^{t}, e_j^{t} \rangle\Big] \nonumber \\
&\leq \frac{1}{N^2} (N(1 -\delta^2)\rho^2 + N(N-1) \tau) \nonumber \\
&= \frac{1}{N}\big((1 -\delta^2)\rho^2 + (N-1) \tau\big),
\end{align}
which proves the theorem~\ref{thm:e4error}.
\end{proof}

\subsection{Proof of Theorem~\ref{thm:convergence}}

% To prove Theorem~\ref{thm:convergence}, we first present the following lemma.
To prove Theorem~\ref{thm:convergence}, we first establish a bound on the divergence between local and global models.
\begin{lemma} \label{lm:x2avg}
	Let $x_i^t$ be the local model of client $i$ at step $t$ and $\overline{x}^t$ be the global model at step $t$. 
	Assume that at the beginning of each round $r$, all clients start from the same global model and perform $I$ local updates. 
	Then the expectation of the squared distance between $x_i^t$ and $\overline{x}^t$ is bounded by
	\begin{align*}
		\mathbb{E}[\|x_i^t - \overline{x}^t\|^2] \leq 4\eta^2 I^2\rho^2.
	\end{align*}
\end{lemma}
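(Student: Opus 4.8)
The plan is to exploit the synchronization structure of FedAvg: within a round every client starts from the common iterate, so the divergence $x_i^t - \overline{x}^t$ is caused only by the at most $I$ local steps taken since the last aggregation. Fix a step $t$ and let $r = \lfloor t/I\rfloor$, so that $x_j^{Ir} = \overline{x}^{Ir}$ for all $j$ by hypothesis and the number of elapsed local steps satisfies $t - Ir \le I$. Unrolling the local updates gives
\[
x_i^t - \overline{x}^{Ir} = -\eta \sum_{\tau = Ir}^{t-1} g_i^{\tau}, \qquad \overline{x}^t - \overline{x}^{Ir} = \frac{1}{N}\sum_{j=1}^{N}\big(x_j^t - \overline{x}^{Ir}\big).
\]
Writing $y_j \triangleq x_j^t - \overline{x}^{Ir}$ and $\bar y \triangleq \frac{1}{N}\sum_{j} y_j$, the target quantity becomes $\mathbb{E}\big[\|y_i - \bar y\|^2\big]$.

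First I would bound each $\mathbb{E}\|y_j\|^2$. Since the inner sum has $t - Ir \le I$ terms, Jensen's inequality yields $\|\sum_{\tau=Ir}^{t-1} g_j^\tau\|^2 \le I \sum_{\tau=Ir}^{t-1}\|g_j^\tau\|^2$, and taking expectations together with Assumption 3 (namely $\mathbb{E}\|g_j^\tau\|^2 \le \rho^2$) gives $\mathbb{E}\|y_j\|^2 \le \eta^2 I^2 \rho^2$ for every $j$. Next I would combine the clients via the elementary inequality $\|y_i - \bar y\|^2 \le 2\|y_i\|^2 + 2\|\bar y\|^2$ together with convexity of $\|\cdot\|^2$, i.e. $\|\bar y\|^2 \le \frac{1}{N}\sum_j \|y_j\|^2$; taking expectations then gives
\[
\mathbb{E}\big[\|x_i^t - \overline{x}^t\|^2\big] \le 2\,\mathbb{E}\|y_i\|^2 + \frac{2}{N}\sum_{j=1}^{N}\mathbb{E}\|y_j\|^2 \le 2\eta^2 I^2\rho^2 + 2\eta^2 I^2\rho^2 = 4\eta^2 I^2\rho^2,
\]
which is the claimed bound.

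The computations are routine; the only point requiring care is the bookkeeping of the averaging schedule. Because global averaging is performed only at steps divisible by $I$, at an intermediate step $\overline{x}^t$ must be read as the virtual mean $\frac{1}{N}\sum_j x_j^t$ of the local models, and it is precisely the reset identity $x_j^{Ir} = \overline{x}^{Ir}$ that makes both unrolled sums start at the last synchronization index and hence contain at most $I$ terms. If one prefers to follow the deterministic local update $x_i^{\tau+1} = x_i^\tau - \eta\nabla f_i(x_i^\tau)$ as written in the FedAvg description, the argument is unchanged since Assumption 3 bounds the second moment of the step in either case; alternatively one may split $g_i^\tau$ into $\nabla f_i(x_i^\tau)$ plus a zero-mean term and invoke Assumption 2, which only sharpens constants and is unnecessary for the stated bound.
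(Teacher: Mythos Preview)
Your proposal is correct and follows essentially the same route as the paper: unroll both $x_i^t$ and $\overline{x}^t$ to the last synchronization point, split the difference via $\|a-b\|^2\le 2\|a\|^2+2\|b\|^2$, apply Jensen to the at most $I$ local-step sums, and finish with Assumption~3. The only cosmetic difference is that the paper writes the decomposition directly as a difference of two gradient sums and uses the compressed gradient $\hat g_i^\tau$ on the local side (bounding it via $\|\hat g_i^\tau\|^2=\|g_i^\tau\|^2-\|e_i^\tau\|^2\le\rho^2$), whereas you phrase it through the displacements $y_j$ and uncompressed $g_i^\tau$; your final paragraph already anticipates this point.
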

\begin{proof}
	This result follows from bounding the accumulation of local updates and the difference between averaged global and local gradients:
	\begin{align}
		&\mathbb{E}[\|x_i^t - \overline{x}^t\|^2] = \mathbb{E}\Big[\|\eta \sum_{\tau=t_0(r)}^{t} \frac{1}{N} \sum_{j=1}^{N} g_j^\tau - \eta \sum_{\tau=t_0(r)}^{t} \hat{g}_i^\tau\|^2\Big] \nonumber \\
		&\leq 2\eta^2 \mathbb{E}\Big[\|\sum_{\tau=t_0(r)}^{t} \frac{1}{N} \sum_{j=1}^{N} g_j^\tau\|^2 + \|\sum_{\tau=t_0(r)}^{t} \hat{g}_i^\tau\|^2\Big] \nonumber \\
		&\leq 2\eta^2 I \mathbb{E}\Big[\sum_{\tau=t_0(r)}^{t} \frac{1}{N} \sum_{j=1}^{N} \|g_j^\tau\|^2 + \sum_{\tau=t_0(r)}^{t} \|\hat{g}_i^\tau\|^2\Big] \nonumber \\
		&\leq 4\eta^2 I^2\rho^2,
	\end{align}
	where we use $\|\hat{g}_i^\tau\|^2 \leq \rho^2$ as $\|\hat{g}_i^\tau\|^2 = \|g_i^\tau\|^2 - \|e_i^\tau\|^2 \leq \|g_i^\tau\|^2$. This completes the proof of Lemma~\ref{lm:x2avg}.
\end{proof}

\textit{Theorem 2:} Under Assumptions 1-3 and Theorem~\ref{thm:e4error}, for all $T > 1$, the following inequality holds:
\begin{align*}
	&\frac{1}{T} \sum_{t=1}^{T} \mathbb{E}[\|\nabla f(\overline{x}^{t-1})\|^2] \leq \frac{4}{\eta T} (f(\overline{x}^{0}) - f^*)+ \frac{4 \eta \sigma^2}{N} \nonumber \\
	& + 16L^2 \eta^2 I^2 \rho^2 + \frac{4(1 + L\eta)}{N}\bigg((1 -\delta^2)\rho^2 + (N-1) \tau\bigg).
\end{align*}
\begin{proof}
	By the \(L\)-smoothness of \(f\), we have
	\begin{align} \label{eq:convergence}
		&\mathbb{E}[f(\overline{x}^t)] \leq \mathbb{E}[f(\overline{x}^{t-1})] \stackrel{(a)}{+} \mathbb{E}[\langle \nabla f(\overline{x}^{t-1}), \overline{x}^t - \overline{x}^{t-1} \rangle]  \nonumber \\
		&\stackrel{(b)}{+} \frac{L}{2} \mathbb{E}[\|\overline{x}^t - \overline{x}^{t-1}\|^2].
	\end{align}
	Using the update $\overline{x}^t = \overline{x}^{t-1} - \eta \frac{1}{N} \sum_{i=1}^N \hat{g}_i^t$ with $\hat{g}_i^t = g_i^t - e_i^t$, we split the term (a) into the standard FedAvg term and the compression error term:
	\begin{align} \label{eq:convergence_a}
		&\mathbb{E}[\langle \nabla f(\mathbf{\overline{x}}^{t-1}), \overline{x}^t - \overline{x}^{t-1} \rangle] = -\eta \mathbb{E}\Big[\langle \nabla f(\overline{x}^{t-1}), \frac{1}{N} \sum_{i=1}^N \hat{g}_i^t \rangle \Big] \nonumber \\ 
		&= -\eta \mathbb{E}\Big[\langle \nabla f(\overline{x}^{t-1}), \frac{1}{N} \sum_{i=1}^N g_i^t \rangle - \langle \nabla f(\overline{x}^{t-1}), \frac{1}{N} \sum_{i=1}^N e_i^t \rangle\Big].
	\end{align}
	The first term is handled as in standard FedAvg analyses, we expand the inner product into squared norms and apply the unbiasedness and variance bound of stochastic gradients:
	\begin{align}
	&-\eta \mathbb{E}[\langle \nabla f(\overline{x}^{t-1}), \frac{1}{N} \sum_{i=1}^N g_i^t \rangle] = -\frac{\eta}{2} \mathbb{E}[\|\nabla f(\overline{x}^{t-1})\|^2] \nonumber \\
	& -\frac{\eta}{2N^2} \mathbb{E}\Big[\|\sum_{i=1}^N g_i^{t}\|^2\Big] + \frac{\eta}{2} \mathbb{E}\Big[\|\nabla f(\overline{x}^{t-1}) - \frac{1}{N} \sum_{i=1}^N g_i^{t}\|^2\Big] \nonumber \\
	&= -\frac{\eta}{2} \mathbb{E}[\|\nabla f(\overline{x}^{t-1})\|^2] -\frac{\eta}{2N^2} \mathbb{E}\Big[\|\sum_{i=1}^N g_i^{t}\|^2 \Big] \nonumber \\
	&+ \eta \mathbb{E}\Big[\|\nabla f(\overline{x}^{t-1}) - \frac{1}{N} \sum_{i=1}^N \nabla f_i(x_i^{t-1})\|^2\Big] \nonumber \\
	&+ \eta \mathbb{E}\Big[\|\frac{1}{N} \sum_{i=1}^N \big(\nabla f_i(x_i^{t-1}) - g_i^{t}\big)\|^2\Big] \nonumber \\
	&\leq -\frac{\eta}{2} \mathbb{E}[\|\nabla f(\overline{x}^{t-1})\|^2] -\frac{\eta}{2N^2} \mathbb{E}\Big[\|\sum_{i=1}^N g_i^{t}\|^2 \Big] \nonumber \\
	&+ \frac{L^2\eta }{N} \mathbb{E}\Big[\sum_{i=1}^N \|\overline{x}^{t-1} - x_i^{t-1}\|^2\Big] + \frac{\eta \sigma^2}{N}.
	\end{align}
	The inner product term induced by compression error in Equation~\ref{eq:convergence_a} is bounded using Young's inequality: for any vectors $a, b$ and any $\alpha>0$, $\langle a, b \rangle \le \frac{1}{2\alpha}\| a\|^2 + \frac{\alpha}{2}\|b\|^2$. Here we set $\alpha = 2$, yielding	
	\begin{align}
	&\eta\mathbb{E}\Big[\Big\langle \nabla f(\overline{x}^{t-1}), \frac{1}{N}\sum_{i=1}^N e_i^t\Big\rangle\Big] \nonumber \\
	&\le \eta\mathbb{E}\Big[\frac{1}{4}\|\nabla f(\overline{x}^{t-1})\|^2 + \| \frac{1}{N}\sum_{i=1}^N e_i^t\|^2\Big]  \nonumber\\
	&= \frac{\eta}{4}\mathbb{E}\big[\|\nabla f(\overline{x}^{t-1})\|^2\big] + \eta\mathbb{E}\Big[\| \frac{1}{N}\sum_{i=1}^N e_i^t\|^2\Big].
	\end{align}
	% 写明使用的 Young 不等式形式, 取 $a = 1/2$，这样读者能更方便核对常数。
	Next bound term (b) in Equation~\ref{eq:convergence}:   
	\begin{align}\label{eq:convergence_b}
	&\mathbb{E}\big[\|\overline{x}^t-\overline{x}^{t-1}\|^2\big]
	= \frac{\eta^2}{N^2}\mathbb{E}\Big[\Big\|\sum_{i=1}^N (g_i^t - e_i^t)\Big\|^2\Big] \nonumber\\
	&\le \frac{2\eta^2}{N^2}\mathbb{E}\Big[\Big\|\sum_{i=1}^N g_i^t\Big\|^2\Big] + 2\eta^2\mathbb{E}\Big[ \|\frac{1}{N}\sum_{i=1}^Ne_i^t \|^2\Big].
	\end{align}
	Combining the above bounds and using Lemma~\ref{lm:x2avg} for the local-global divergence, we obtain a one-step descent inequality:	
	\begin{align}
		&\mathbb{E}[f(\overline{x}^t)] \leq \mathbb{E}[f(\overline{x}^{t-1})] -\frac{\eta}{2} \mathbb{E}[\|\nabla f(\overline{x}^{t-1})\|^2] \nonumber\\
		&-\frac{\eta}{2N^2} \mathbb{E}\Big[\|\sum_{i=1}^N g_i^{t}\|^2 \Big] + \frac{L^2 \eta}{N}\mathbb{E}[\|\overline{x}^{t-1} - x_i^{t-1}\|^2] + \frac{\eta \sigma^2}{N}  \nonumber \\
		&+ \frac{\eta}{4}\mathbb{E}\big[\|\nabla f(\overline{x}^{t-1})\|^2\big] + \eta\mathbb{E}\Big[\|\frac{1}{N}\sum_{i=1}^Ne_i^t\|^2\Big] \nonumber \\
		&+ \frac{L\eta^2}{N^2}\mathbb{E}\Big[\Big\|\sum_{i=1}^N g_i^{t}\Big\|^2\Big] + L\eta^2\mathbb{E}\Big[\|\frac{1}{N}\sum_{i=1}^N e_i^t\|^2\Big].
	\end{align}
	Rearranging terms gives:
	\begin{align}
		&\mathbb{E}[f(\overline{x}^t)] \leq \mathbb{E}[f(\overline{x}^{t-1})] -\frac{\eta}{4} \mathbb{E}[\|\nabla f(\overline{x}^{t-1})\|^2] + \frac{\eta^2 \sigma^2}{N} \nonumber \\
		& + \frac{L^2\eta}{N}\mathbb{E}\Big[\sum_{i=1}^N \|\overline{x}^{t-1} - x_i^{t-1}\|^2\Big] + (1 + L\eta)\eta \mathbb{E}\Big[\|\frac{1}{N}\sum_{i=1}^Ne_i^t\|^2\Big] \nonumber \\
		&- \frac{\eta-2L\eta^2}{N^2} \mathbb{E}\Big[\Big\|\sum_{i=1}^N g_i^{t}\Big\|^2\Big].
	\end{align}
	Let $0 < \eta \leq \frac{1}{2L}$ and use Lemma~\ref{lm:x2avg}, we get:
	\begin{align}
		&\mathbb{E}[f(\overline{x}^t)] \leq \mathbb{E}[f(\overline{x}^{t-1})] -\frac{\eta}{4} \mathbb{E}[\|\nabla f(\overline{x}^{t-1})\|^2] + 4L^2 \eta^3 I^2 \rho^2 \nonumber \\
		&+ \frac{\eta^2 \sigma^2}{N} + (1 + L\eta)\eta \mathbb{E}\Big[\|\frac{1}{N}\sum_{i=1}^Ne_i^t\|^2\Big].
	\end{align}
	Dividing both sides by $\eta$ and rearranging, we obtain:
	\begin{align} \label{eq:convergence_final}
		&\mathbb{E}[\|\nabla f(\overline{x}^{t-1})\|^2] \leq \frac{4}{\eta} (\mathbb{E}[f(\overline{x}^{t-1})] - \mathbb{E}[f(\overline{x}^{t})]) + \frac{4 \eta \sigma^2}{N}\nonumber \\
		& + 16L^2 \eta^2 I^2 \rho^2 + 4(1 + L\eta)\mathbb{E}\Big[\|\frac{1}{N}\sum_{i=1}^N e_i^t\|^2\Big].
	\end{align}
	Summing over $t=1,\dots,T$ and dividing by $T$ gives the desired convergence bound:
	\begin{align}
		&\frac{1}{T} \sum_{t=1}^T \mathbb{E}[\|\nabla f(\overline{x}^{t-1})\|^2] \leq \frac{4}{\eta T} (f(\overline{x}^{0}) - f^*)+ \frac{4 \eta \sigma^2}{N} \nonumber \\
		& + 16L^2 \eta^2 I^2 \rho^2 + 4(1 + L\eta)\mathbb{E}\Big[\|\frac{1}{N}\sum_{i=1}^N e_i^t\|^2\Big].
	\end{align}
	% 由于联邦学习的客户端间数据往往是non iid分布，我们可以假定误差在客户端间的协方差小于 \tau
	Finally, substituting the upper bound on the averaged reconstruction error from Theorem~\ref{thm:e4error} yields   
	\begin{align}
		&\frac{1}{T} \sum_{t=1}^T \mathbb{E}[\|\nabla f(\overline{x}^{t-1})\|^2] 
		% \leq \frac{4}{\eta T} (f(\overline{x}^{0}) - f^*)+ \frac{4 \eta \sigma^2}{N} \nonumber \\
		% & + \frac{4(1 + 2L\eta)}{N} \big(N \psi \rho^2 + N(N-1) \tau\big) \nonumber \\
		\leq  \frac{4}{\eta T} (f(\overline{x}^{0}) - f^*) + 16L^2 \eta^2 I^2 \rho^2 \nonumber \\
		& + \frac{4 \eta \sigma^2}{N} + \frac{4(1 + L\eta)}{N}\bigg((1 -\delta^2)\rho^2 + (N-1) \tau\bigg).
	\end{align}
	The proof of Theorem~\ref{thm:convergence} is completed.
\end{proof}

}

% \section{References Section}
 \bibliographystyle{IEEEtran}
 \bibliography{IEEEabrv,ref}

\begin{IEEEbiography}[{\includegraphics[width=1in,height=1.25in,clip,keepaspectratio]{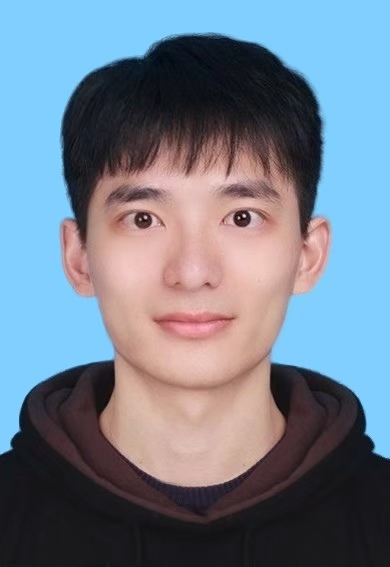}}]{Shenlong Zheng}
received his B.S. in Computer Science and Technology from South China Agricultural University, China, in 2023, and is currently pursuing a Master's Degree in Computer Technology at Jinan University. His research interests include cloud computing, resource management in data centers, and federated learning.
\end{IEEEbiography}
\vspace{-10mm}
\begin{IEEEbiography}[{\includegraphics[width=1in,height=1.25in,clip,keepaspectratio]{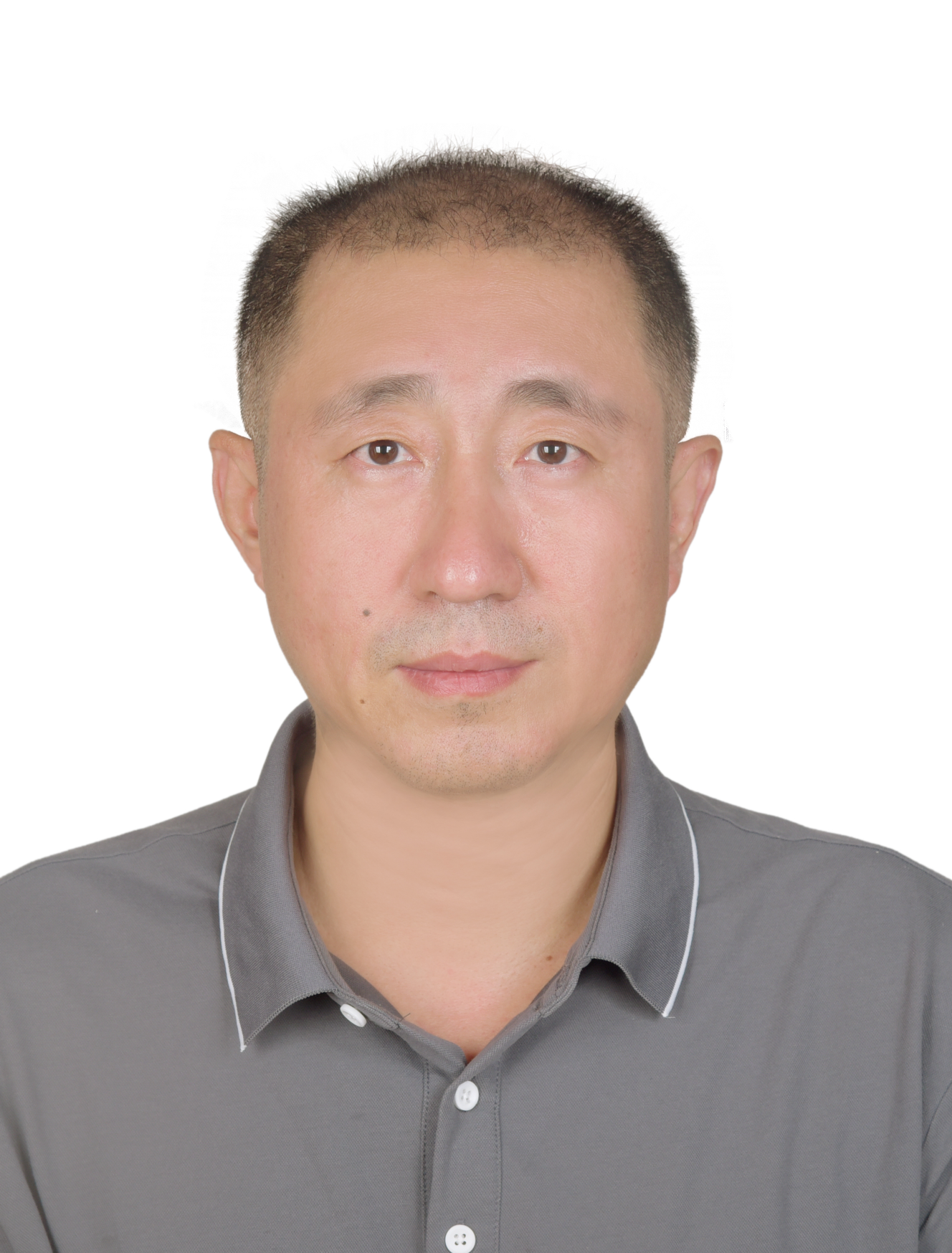}}]{Zhen Zhang}
	received the BS and MS degrees in
	computer science from Jilin University, China, in
	1999 and 2003, and PhD degree in College of Computer Science and Engineering from South China
	University of Technology, China, in 2011. He is
	currently a professor at the Department of Computer
	Science of Jinan University. His research interests
	include graph theory, parallel and distributed processing and complex networks.
\end{IEEEbiography}
\vspace{-10mm}
\begin{IEEEbiography}[{\includegraphics[width=1in,height=1.25in,clip,keepaspectratio]{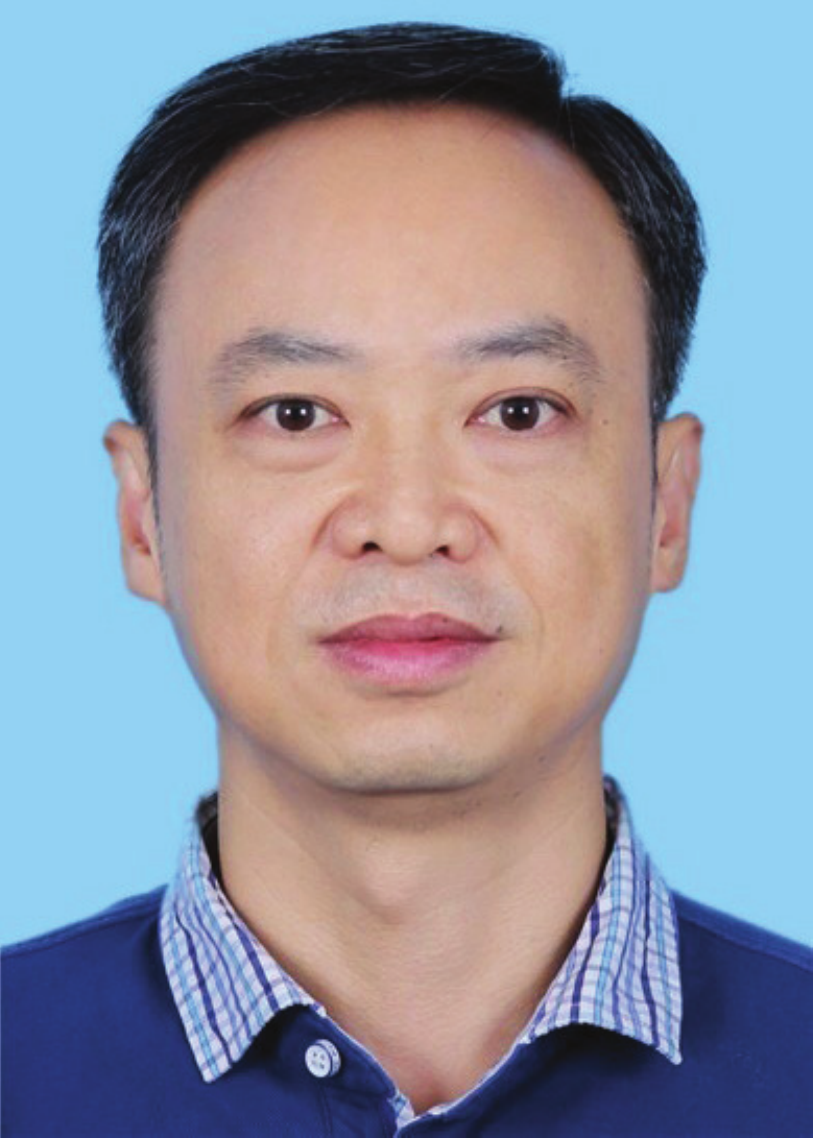}}]{Yuhui Deng}
	is a professor at the computer science
	Department of Jinan University. Before joining Jinan
	University, Dr. Yuhui Deng worked at EMC Corporation as a senior research scientist from 2008 to
	2009. He worked as a research officer at Cranfield
	University in the United Kingdom from 2005 to
	2008. He received his Ph.D. degree in computer
	science from Huazhong University of Science and
	Technology in 2004. His research interests cover information storage, cloud computing, green computing, computer architecture, performance evaluation,
	etc.
\end{IEEEbiography}
\vspace{-10mm}
\begin{IEEEbiography}[{\includegraphics[width=1in,height=1.25in,clip,keepaspectratio]{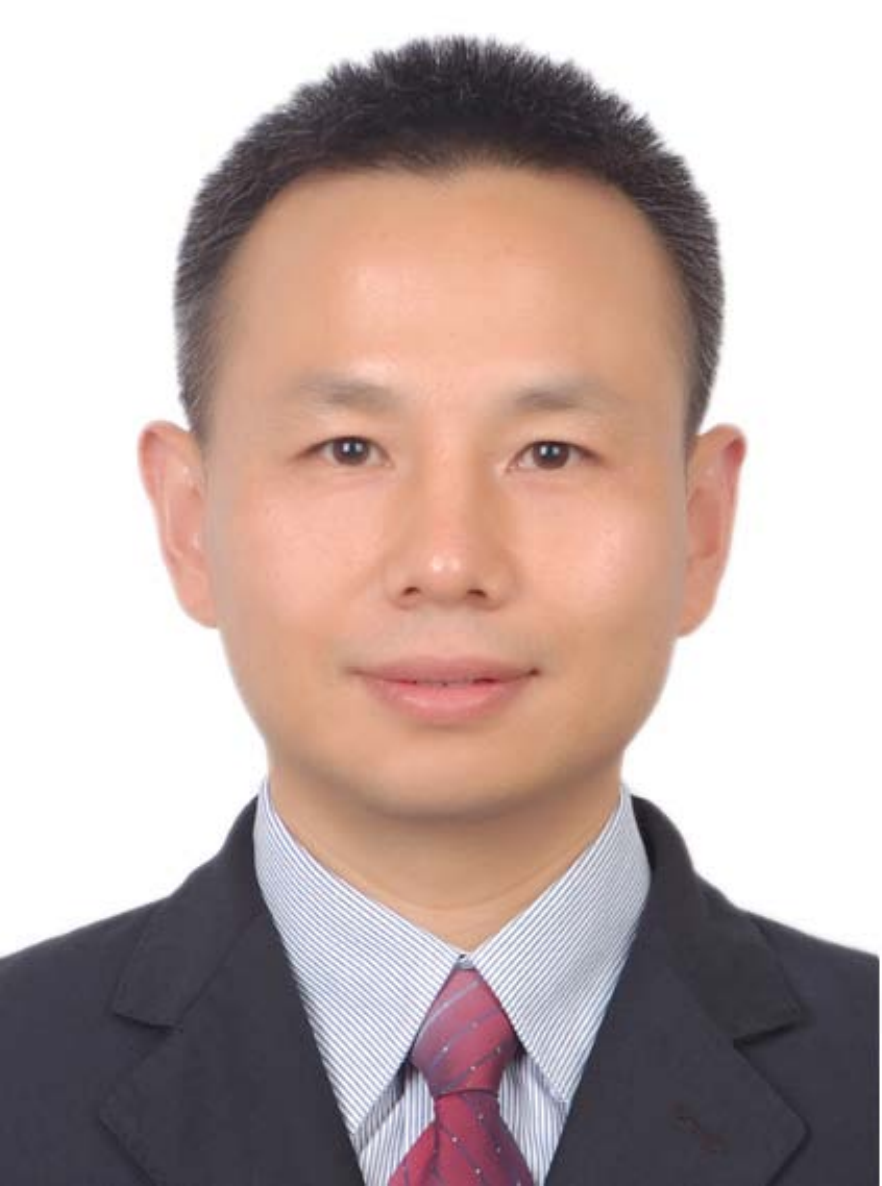}}]{Geyong Min}
	received the BSc degree in computer
	science from the Huazhong University of Science
	and Technology, China, in 1995, and the PhD de
	gree in computing science from the University of
	Glasgow, U.K., in 2003. He is a professor of high
	performance computing and networking with the
	Department of Mathematics and Computer Science
	within the College of Engineering, Mathematics and
	Physical Sciences, University of Exeter, U.K. His
	research interests include future Internet, computer
	networks, wireless communications, multimedia systems, information security, high performance computing, ubiquitous computing, modelling, and performance engineering. He serves as an associate
	editor of the IEEE Transactions on Cloud Computing, IEEE Transactions on
	Sustainable Computing, IEEE Transactions on Computers, and etc.
\end{IEEEbiography}
\vspace{-10mm}
\begin{IEEEbiography}[{\includegraphics[width=1in,height=1.25in,clip,keepaspectratio]{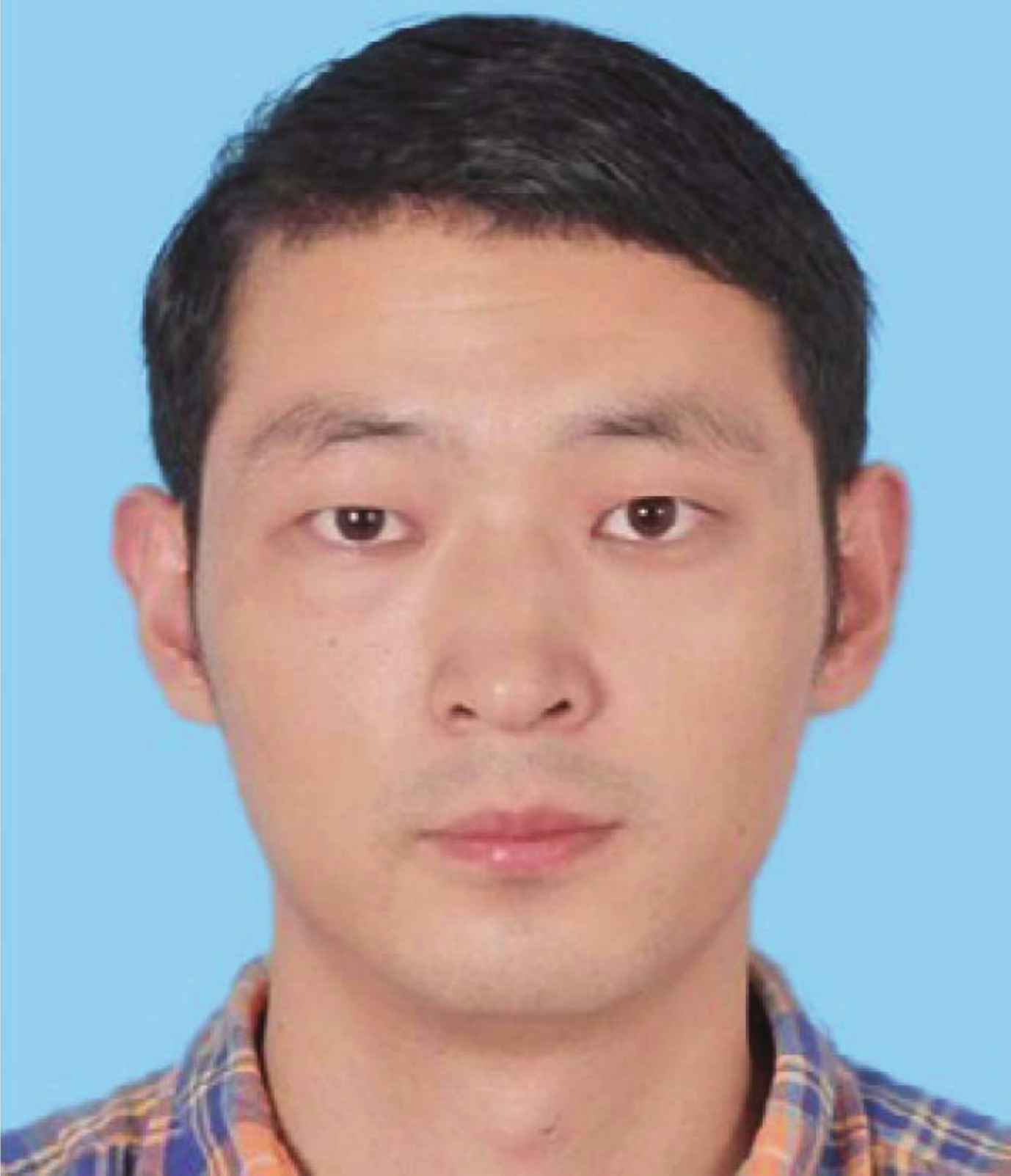}}]{Lin Cui}
	is currently with the Department of
	Computer Science at Jinan University, Guangzhou,
	China. He received the Ph.D. degree from City University of Hong Kong in 2013. He has broad interests
	in networking systems, with focuses on the following topics: cloud data center resource management,
	data center networking, software defined networking
	(SDN), virtualization, programmable data plane and
	so on.
\end{IEEEbiography}
\end{document}